\DeclareMathOperator*{\argmax}{arg\,max}
\DeclareMathOperator*{\argmin}{arg\,min}
\theoremstyle{plain}
\newtheorem{theorem}{Theorem}[section]
\newtheorem{proposition}[theorem]{Proposition}
\theoremstyle{definition}
\newtheorem{assumption}[theorem]{Assumption}
\theoremstyle{remark}
\title{Deep Gaussian Mixture Ensembles}
\author[1]{Yousef El-Laham}
\author[1]{Niccol\`o Dalmasso}
\author[1]{Elizabeth Fons}
\author[1]{Svitlana Vyetrenko}
\affil[1]{%
    J.P. Morgan AI Research, New York, USA
}
\begin{document}
\maketitle

\begin{abstract}
This work introduces a novel probabilistic deep learning technique called deep Gaussian mixture ensembles (DGMEs), which enables accurate quantification of both epistemic and aleatoric uncertainty. By assuming the data generating process follows that of a Gaussian mixture, DGMEs are capable of approximating complex probability distributions, such as heavy-tailed or multimodal distributions. Our contributions include the derivation of an expectation-maximization (EM) algorithm used for learning the model parameters, which results in an upper-bound on the log-likelihood of training data over that of standard deep ensembles.  Additionally, the proposed EM training procedure allows for learning of mixture weights, which is not commonly done in ensembles. Our experimental results demonstrate that DGMEs outperform state-of-the-art uncertainty quantifying deep learning models in handling complex predictive densities.
\end{abstract}

\section{Introduction}
\label{introduction}
Uncertainty quantification plays a key role in the development and deployment of machine learning systems, especially in applications where user safety and risk assessment are of paramount importance \citep{abdar2021review}. While deep learning (DL) has cemented its superiority in terms of raw predictive performance for a variety of applications, the principled incorporation of uncertainty quantification in DL models remains an open challenge.  Since standard DL models are unable to properly quantify predictive uncertainty, one common challenge for deep learning models is detecting out-of-distribution (OOD) inputs. It is often the case that OOD inputs lead a DL model to making erroneous predictions \citep{ovadia2019can}. Without uncertainty quantification, one cannot reason about whether an input is OOD and this can be catastrophic in applications such as machine-assisted medical decision making \citep{begoli2019need} or self-driving vehicles \citep{michelmore2018evaluating}. Moreover, uncertainty quantification can also be used as a means to select samples to label in active learning and for enabling exploration in reinforcement learning algorithms \citep{clements2019estimating, charpentier2022disentangling}.

Uncertainty in machine learning models is derived from two different sources: aleatoric uncertainty and epistemic uncertainty \citep{kendall2017uncertainties, hullermeier2019aleatoric, valdenegro2022deeper}. Aleatoric uncertainty derives from measurement process of the data, while epistemic uncertainty derives from the uncertainty in the parameters of the machine learning model. A variety of approaches have been proposed to quantify both types of uncertainty in DL models from both a Bayesian and frequentist perspective; we refer the reader to \cite{gawlikowski2021survey} for a comprehensive review. Under the Bayesian paradigm, the goal is to infer the posterior predictive density of the target variable given the input features and the training data, which encodes both types of uncertainty. Unfortunately, exact Bayesian inference algorithms (e.g., \citealt{neal2012bayesian}) cannot scale to the parameter space of modern DL architectures and one often has to resort to mini-batching \citep{chen2014stochastic} or forming a rough parametric approximation of the posterior distribution of the parameters, such as the Laplace approximation \citep{daxberger2021laplace} or stochastic variational inference \citep{graves2011practical, hoffman2013stochastic}. The drawback of the parametric approach is the inability to express more complex (e.g., heavy-tailed or multimodal) predictive distributions. As an example, approximations such as mean-field variational inference form a Gaussian predictive distribution that tends to underestimate the true uncertainty of more complex models.

In recent years, there have been developments in probabilistic DL which exploit the inherent stochasticity in learning to quantify predictive uncertainty. Examples include techniques such as probabilistic backpropagation \citep{hernandez2015probabilistic}, Monte Carlo dropout (MCD, \citealt{gal2016dropout}), Monte Carlo batch normalization \citep{teye2018bayesian}, deep ensembles (DEs, \citealt{lakshminarayanan2017de})  among others. MCD and DEs have emerged as state-of-the-art solutions for quantifying uncertainty in DL models due to their simplicity and effectiveness. MCD utilizes the inherent stochasticity of dropout (i.e., random masking of neural network weights) to form an ensemble-based approximation of the predictive distribution through multiple stochastic forward passes of the model to account for epistemic uncertainty. Aleatoric uncertainty is handled as a post-processing step under the assumption that the underlying data noise is homoscedastic.  DEs, on the other hand, independently train a small ensemble of dual-output neural networks, where the outputs characterize the mean and variance of the predictive distribution. Each network in the ensemble is independently trained to maximize the likelihood of the data under the  heteroscedastic Gaussian assumption. At test time, the networks are linearly combined into a single Gaussian approximation of the predictive distribution. Unfortunately, both MCD and DEs are not adequate solutions for modeling more complex data distributions (e.g., heavy-tailed or multimodal distributions).

{\bf Contributions.} Our contribution are as follows:

\begin{enumerate}
    \item We propose a novel probabilistic DL technique called {\it deep Gaussian mixture ensembles} (DGMEs) for jointly quantifying epistemic and aleatoric uncertainty. DGMEs train a weighted deep ensemble using the expectation maximization algorithm;
     \item We show DGMEs optimizes the joint data likelihood directly, unlike deep ensembling that targets a lower bound of the data likelihood. As a consequence, DGMEs achieve a superior loss to deep ensembling, which is corroborated by our experimental results;
    \item  We empirically show that our model is more expressive than standard probabilistic DL approaches and can capture both heavy-tailedness and multimodality.
\end{enumerate}

\section{Related Work}

\textbf{Mixture Density Networks.}
Mixture density networks (MDNs, \citealt{bishop1994mixture}) use a deep neural network to simultaneously learn the means, variances and mixture weights of a Gaussian mixture model.
MDNs have been successfully used in many machine learning applications, such as computer vision~\citep{bazzani2016recurrent}, speech synthesis~\citep{zen2014deep}, probabilistic forecasting~\citep{zhang2020improved}, astronomy~\citep{d2018photometric}, chemistry~\citep{unni2020deep} and epidemiology~\citep{davis2020use}, among others. While MDNs are closely related to DGMEs in terms of uncertainty quantification, they are not an ensemble technique per se, as the epistemic and aleatoric uncertainty cannot be disentangled in MDNs. Moreover, without the ensemble structure, MDNs cannot easily be trained in a distributed setting, whereas the training of DGMEs can trivially be parallelized.

\textbf{Monte Carlo Dropout.}
MCD exploits the stochasticity of dropout training to quantify epistemic uncertainty in DL models. At test time, stochastic forward passes through a DL model with dropout produce ``approximate'' samples from the underlying posterior predictive distribution, which are typically summarized using first- and second-order moments (e.g, mean and variance of the samples). Aleatoric uncertainty is accounted for in a post-processing step, whereby the optimal homescedastic variance that maximizes the evidence lower-bound is obtained via cross-validation. MCD's popularity can be attributed to its simple implementation, as no changes to the standard DL training procedure are required. While vanilla MCD can yield favorable results in additive Gaussian settings, the method is less effective when dealing with more complex data generating processes (e.g., heavy-tailed or multimodal predictive densities). In this work, we incorporate MCD in our training procedure to account for epistemic uncertainty (see Section~\ref{sec:epistemic-uncertainty}).

\textbf{Deep Ensembles.}
DEs quantify both aleatoric and epistemic uncertainty by building an ensemble of independently trained models under different neural network weight initializations. Combined with adversarial training \citep{goodfellow2014explaining}, DEs achieve competitive or better performance than MCD in most settings in terms of calibration of predictive uncertainty and in terms of reasoning about OOD inputs. It has been argued in several works that DEs can be interpreted as a Bayesian approach, where the learned weights of each ensemble member correspond to a sample from the posterior distribution of the network weights \citep{wilson2020bayesian}. In recent years, new variations of deep ensembles have been proposed, such as anchored DEs \citep{pearce2020uncertainty}, deep-split ensembles \citep{sarawgi2020have}, and hybrid training approaches that combine DEs with the Laplace approximation \citep{hoffmann2021deep}. We emphasize that a key distinction between DEs and DGMEs is that each sample in DEs is treated as an i.i.d. sample from a Gaussian distribution, whereas DGMEs assume that data are distributed according to a Gaussian mixture. This gives DGMEs the key advantage of being able to learn more complex data generating processes.

\textbf{Neural Expectation Maximization.}
Neural EM is a differentiable clustering technique that combines the principles of the EM algorithm with neural networks for representation learning, particularly in the field of computer vision for perceptual grouping tasks \citep{NIPS2017_NEM}. The goal of neural EM is to group the individual entities (i.e., pixels) of a given input (i.e., image) that belong to the same object. To do this, a finite mixture model is used to construct a latent representation of each image, where each mixture component represents a distinct object. A neural network is then used to transform the parameters of that mixture model into pixel-wise distributions over the image, allowing for reasoning about which object each pixel in the image belongs to. While neural EM combines the ideas of EM with deep learning, we emphasize that this is different from our work which focuses on the accurate quantification of predictive uncertainty in the supervised learning setting.

\section{Problem Formulation}
\label{problem_formulation}
Consider a set of training data $\mathcal{D}=\{(x_n, y_n)\}_{n=1}^N$, where $x_n\in\mathbb{R}^{d_x}$ is the feature vector and $y_n$ is the output, which can be real-valued if we are dealing with a regression task or integer-valued if we are dealing with a classification task. We would like to train a model that allows us to predict an output $y$ given its corresponding input vector $x$. 

From a probabilistic perspective, the goal is to determine the posterior predictive distribution $p(y|x, \mathcal{D})$. We assume a statistical model $p_{\theta}(y|x)\triangleq p(y|x, \theta)$ that relates each output to its corresponding feature vector through a set of parameters $\theta\in\Theta$. Then, the predictive distribution can be determined as
\begin{equation}
    \label{eq: predictive_distribution}
    p(y|x, {\cal D})=\int_{\Theta} p_{\theta}(y|x)p(\theta|\mathcal{D})d\theta. 
\end{equation}
While this integral is generally intractable, it can be approximated using a Monte Carlo average, where samples are taken from the posterior $p(\theta|{\cal D})$. Let $Y=\{y_1,\ldots,y_N\}$ and let $X=\{x_1,\ldots, x_N\}$. According to Bayes theorem, the posterior distribution $p(\theta|{\cal D})$ is
\begin{equation}
    \label{eq: bayes_theorem_post}
    p(\theta|\mathcal{D}) = \frac{p(Y|X,\theta)p(\theta)}{p(Y|X)}, 
\end{equation}
where $p(Y|X,\theta)=\prod_{n=1}^N p_{\theta}(y_n|x_n)$ is called the {\it data likelihood} under the i.i.d. assumption, $p(\theta)$ is the {\it prior distribution} of $\theta$, and $p(Y|X)=\int_{\Theta}p(Y|X,\theta)p(\theta)d\theta$ is called the {\it marginal likelihood}. The posterior can only be computed analytically when $p(\theta)$ is a conjugate prior for the likelihood function $p(Y|X, \theta)$. For deep learning models, an analytical solution to the posterior cannot be determined and one must resort to an approximation of the predictive distribution.

\paragraph{Goal.} The goal is to acquire an approximation of the posterior predictive distribution. Ideally, samples from the approximation form consistent estimators of key moments of the predictive distribution that allow one to (i) formulate predictions, (ii) identify the underlying stochastic risk associated with the prediction (e.g., aleatoric uncertainty), and (iii) reason about the model's uncertainty in the presence of the OOD data (i.e., epistemic uncertainty).

\section{Deep Gaussian Mixture Ensembles}
\label{proposed_method}
In this work, we propose DGMEs to effectively learn a mixture distribution that accurately represents the true conditional density of the labels given the features. Since Gaussian mixtures are universal approximators for smooth probability density functions \citep{bacharoglou2010approximation, goodfellow2016deep}, modeling the conditional density $p_{\theta}(y|x)$ as a Gaussian mixture allows for learning more complex distributions, such as skewed, heavy-tailed, and multimodal distributions. Under the assumption that our data follows a mixture distribution with $K$ mixture components, the conditional density of a particular example $(x, y)$ is given by: 
\begin{equation}
    \label{eq: mixture_of_gaussains_assumptions}
    p_{\theta}(y|x) = \sum_{k=1}^K \pi_k p_k(y|x, \theta_k),
\end{equation}
where $\theta_k\in\Theta_k\subseteq \mathbb{R}^{d_{\theta}}$ denotes the underlying parameters of the $k$-th mixture and $\pi_k$ denotes the weight of the $k$-th mixture and represents the probability that the example $(x, y)$ is distributed according to $p_k(y|x, \theta_k)$. Throughout the rest of the text, we refer to all unknown parameters in the mixture as $\theta=\{\pi_1, \theta_1, ..., \pi_K, \theta_K\}$. Hereafter, we consider the problem of learning the parameters of the mixture in \eqref{eq: mixture_of_gaussains_assumptions} in the context of regression. We discuss a possible extension to classification in the Supplementary Material, Section~\ref{app: ood_detection}.

To effectively model this mixture, we make the following assumptions:
\begin{assumption}
\label{assum: equally_weighted mixture}
The mixture weights $(\pi_1, \ldots, \pi_K)\in{\cal S}_K$ do not depend on the input features, where ${\cal S}_K$ denotes the $K$-dimensional probability simplex.
\end{assumption}
\begin{assumption}
\label{assum: gaussian_mixture_assumption}
The conditional density $p_k(y|x, \theta_k)$ is a Gaussian distribution whose parameters are modeled via parameterized functions (neural networks) dependent on $x$:
\begin{equation}
\label{eq: conditional_density_assumption}
p_k(y|x, \theta_k) = {\cal N}(y; \mu_{\theta_k}(x), \sigma^2_{\theta_k}(x)),
\end{equation}
where $\theta_k$ denote the parameters of functions $\mu_{\theta_k}(\cdot)$ and $\sigma^2_{\theta_k}(\cdot)$ that output the mean and variance of the $k$-th mixture, respectively. Importantly, these functions are assumed to share parameters, just as in the original work on DEs \citep{lakshminarayanan2017de}.
\end{assumption}
Under the above assumptions, learning the mixture representation of $p_{\theta}(y|x)$ is equivalent to learning the parameters $\theta$ to maximize the data likelihood of the training examples ${\cal D}=\{(x_n, y_n)\}_{n=1}^N$.

\subsection{Learning the Mixture Parameters}
We obtain the maximum likelihood (ML) estimate or maximum a posteriori (MAP) estimate of the unknown parameters $\theta$ using the EM algorithm. Let $Y=\{y_1, \ldots, y_N\}$ and $X=\{x_1, \ldots, x_N\}$. Furthermore, let $Z=\{z_1, \ldots, z_N\}$, where each $z_n\in\{1,\ldots, K\}$ is a latent variable that denotes membership assignment of the training example $(x_n, y_n)$ to a particular mixture component, where $\pi_k\triangleq P_{\theta}(z_n=k)$ is the probability that the example $(x_n, y_n)$ belongs to the $k$-th component. Assuming that the training examples are independent and identically distributed, we can write the joint likelihood as
\begin{align*}
&p_{\theta}(Y, Z|X) = \\
&\prod_{n=1}^N\prod_{k=1}^K \pi_k^{\mathbb{I}(z_n=k)}\mathcal{N}(y_n; \mu_{\theta_k}(x_n), \sigma_{\theta_k}^2(x_n))^{\mathbb{I}(z_n=k)},
\end{align*}
with corresponding log-likelihood of
\begin{align*}
&\log p_{\theta}(Y, Z|X) =  \sum_{n=1}^N\sum_{k=1}^K \mathbb{I}(z_n=k)(\log \pi_k + \ell_{\theta_k}(x_n, y_n))
\end{align*}
where $\ell_{\theta_k}(x, y)=\log\left(\mathcal{N}(y; \mu_{\theta_k}(x), \sigma_{\theta_k}^2(x))\right)$. Our goal is to solve the following optimization problem:
\begin{align}
    \label{eq: exact_data_likelihood_max}
    \theta^\star &= \argmax_{\theta} \log p_{\theta}(Y|X) \\
    &= \argmax_{\theta} \log \left(\mathbb{E}_{Z|X, Y, \theta} \left[p_{\theta}(Y, Z|X)\right]\right),
\end{align}
which we numerically solve using the EM algorithm. In the following, we describe both the expectation step (E-Step) and maximization step (M-Step) as it relates to our model. As a note, all results presented hereafter also apply to the more general problem of obtaining the MAP estimate of the parameters $\theta$.\footnote{That is, the maximizer of $\log p(Y, \theta|X)  = \log p_{\theta}(Y|X) + \log p(\theta)$, where $p(\theta)$ is the prior distribution of the mixture parameters.}

\paragraph{E-Step:} 
We update the posterior probabilities of each $z_n$ given the parameters $\theta$ and the example $(x_n, y_n)$ for each $n$, denoted by $\gamma_{k, n}\triangleq P_{\theta}(z_n=k|x_n, y_n)$. This can be done directly using Bayes' theorem:
\begin{align}
    \gamma_{k, n} &= \frac{p_{k}(y_n|x_n, \theta_k)P_{\theta}(z_n=k)}{\sum_{j=1}^K p_{j}(y_n|x_n, \theta_j)P_{\theta}(z_n=j)} \\
    &= \frac{\pi_k{\cal N}(y_n; \mu_{\theta_k}(x_n), \sigma_{\theta_k}^2(x_n))}{\sum_{j=1}^K \pi_j {\cal N}(y_n; \mu_{\theta_j}(x_n), \sigma_{\theta_j}^2(x_n))} \label{eq: posterior_updates}
\end{align}

\paragraph{M-Step:} The parameters $\theta$ are updated in the maximization step by maximizing the expected joint log-likelihood $Q(\theta, \theta')\triangleq \mathbb{E}_{Z|X, Y, \theta'} \left[\log p_{\theta}(Y, Z|X)\right]$ given the previous parameter values $\theta'$, which is equivalent to doing lower-bound maximization on the true log-likelihood \citep{minka1998expectation}. The function $Q(\theta, \theta')$ can be readily determined as:
\begin{equation} 
Q(\theta, \theta') = \sum_{n=1}^N\sum_{k=1}^K \gamma_{k, n}(\log(\pi_k) + \ell_{\theta_k}(x_n, y_n)).
\end{equation}
The optimization of the mixture weights $(\pi_1, \ldots, \pi_K)$ can be carried out analytically and done independently of optimizing the mixture parameters $\{\theta_1, \ldots, \theta_K\}$:
\begin{equation}
    (\pi_1^\star,\ldots, \pi_K^\star) = \argmax_{(\pi_1,\ldots,\pi_K)\in{\cal S}_K} Q(\theta, \theta'),
\end{equation}
where for each  $k$
\begin{equation}
    \pi_k^\star = \frac{1}{N}\sum_{n=1}^N \gamma_{k, n}.
\end{equation}
Since the mixture parameters are assumed to be parameterised by neural networks, their optimization must be carried out using stochastic optimization. It is easy to see that the optimization of each $\theta_k$ can be done independently as:
\begin{align} 
     &\hspace*{-0.1cm}\theta_k^\star = \argmax_{\theta_k\in\Theta_k} \sum_{n=1}^N \gamma_{k, n}\ell_{\theta_k}(x_n, y_n) \\
    &\hspace*{-0.1cm}= \argmin_{\theta_k\in\Theta_k} \sum_{n=1}^N \gamma_{k, n} \left(\log \sigma_{\theta_k}^2(x_n) + \frac{(y_n - \mu_{\theta_k}(x_n))^2}{\sigma_{\theta_k}^2(x_n)} \right) \label{eq: weighted_nll}
\end{align}
This optimization step can be thought as training a deep ensemble, where each sample $(x_n, y_n)$ is weighted by $\gamma_{k, n}$ in its negative log-likelihood contribution. 

\subsection{Implementation Details}

\begin{algorithm}[t]
   \caption{Deep Gaussian Mixture Ensembles (DGMEs)}
   \label{alg: dgme}
    \begin{algorithmic}[1]
   \STATE {\bf Inputs:} 
   \begin{itemize}
       \item Training dataset ${\cal D}=\{(x_n, y_n)\}_{n=1}^N$ 
       \item Number of mixture components $K$ 
       \item Number of EM steps $J$
   \end{itemize}
   \STATE {\bf Initialize mixture parameters:} 
   \begin{itemize}
       \item Sample $\theta_k^{(0)} \sim p(\theta)$ for all $k$. 
       \item Set $\pi_k^{(0)} = \frac{1}{K}$ for all $k$. 
   \end{itemize}
    \FOR{$j=1,\ldots, J$ } 
   \STATE {\bf E-Step:} Update posterior probabilities $\gamma_{k, n}^{(j)}$ according to \eqref{eq: posterior_updates} with mixture weights $\pi_{k}^{(j-1)}$ and mixture parameters $\theta_k^{(j-1)}$ for all $k$ and $n$.
   \STATE {\bf M-Step:} Update mixture weights $\pi_k^{(j)}$ and parameters $\theta_k^{(j)}$ for all $k$ as
   \begin{equation*}
        \pi_k^{(j)} = \frac{1}{N}\sum_{n=1}^N \gamma_{k, n}^{(j)}
   \end{equation*} 
   and
    \begin{equation*}
    \theta_k^{(j)} = \argmax_{\theta_k\in\Theta_k} \sum_{n=1}^N \gamma_{k, n}^{(j)} \ell_{\theta_k}(x_n, y_n)  
    \end{equation*} 
    \ENDFOR
    \STATE {\bf Return:} $\pi_k^\star = \pi_k^{(J)}$  and $\theta_k^\star = \theta_k^{(J)}$ for all $k$.  
    \end{algorithmic}
\end{algorithm}

Our implementation of DGMEs trained via the EM algorithm is summarized in Algorithm \ref{alg: dgme}. To initialize the ensemble, the parameters of each network in the ensemble are randomly initialized, while the mixture weights are assumed to be equal. The algorithm is run for $J$ steps or alternatively until some stopping criterion is met. The E-Step for updating the posterior probabilities is computed directly for each sample in the training set. In the M-Step, the updates for the mixture weights are also carried out analytically, but for mixture component parameters $\theta_k$
we use the stochastic optimization to numerically solve for the updates, as an analytical solution is not available. At round $j$, we initialize each network to $\theta_k^{(j-1)}$ and then run the Adam optimizer for $E$ epochs to minimize the weighted negative log-likelihood in \eqref{eq: weighted_nll}, where the weights are given by $\gamma_{k, n}^{(j)}$ for all $n$. Finally, we note that the computational complexity of each EM step is equivalent to that of DEs and the overall time complexity scales linearly with the number of EM steps.

\subsection{Quantifying Epistemic Uncertainty} \label{sec:epistemic-uncertainty}
It is important to highlight that up until this point, we have not explicitly considered epistemic uncertainty in DGMEs. This is because the operation of training DGMEs according to Algorithm \ref{alg: dgme} yields a single set of parameters of the assumed Gaussian mixture model.\footnote{This point highlights the intrinsic difference in training DEs versus training DGMEs. DGMEs do not have a ``Bayesian" interpretation, because the EM algorithm used to train them only outputs a single set of possible parameters for the DGMEs (i.e., the corresponding posterior distribution of the weights is a  Dirac measure centered at the learned parameter values).} To account for model uncertainty, we need to account for the uncertainty in the parameters of the mixture (i.e., the mixture weights and/or the weights of the ensemble neural networks).  One simple way to do this is to apply MCD to the training procedure of DGMEs --- although we emphasize other techniques can be applied to account for epistemic uncertainty (e.g., Laplace approximation or a variational approximation to the posterior parameters).

Let $a_k = [a_{k, 1},\ldots, a_{k, d_{\theta}}]^\intercal \in \{0, 1\}^{d_\theta}$ denote a random binary vector of the same size as each $\theta_k$ and let $p_d\in[0, 1]$ denote a fixed dropout probability. Also, let $\theta^\star=\{\pi_1^\star, \theta_1^\star, \ldots,\pi_K^\star, \theta_K^\star\}$ denote the parameters learned by running Algorithm \ref{alg: dgme} with dropout incorporated in the training in the M-Step. For a given mixture component $k$, samples from the approximate posterior distribution of $\theta_k$ learned via dropout can be obtained via the following procedure: 
\begin{align*}
    a_{k, i} &\sim {\rm Bernoulli}(p_d), \quad i=1,\ldots, d_{\theta}, \\
    \theta_k &= a_k \odot \theta_k^\star,
\end{align*}
where $\odot$ denotes a Hadamard (or element-wise) product. It follows that a sample from the predictive distribution can directly be obtained as follows:
\begin{align}
    k &\sim {\rm Categorical}(\pi_1,\ldots, \pi_K),  \label{eq: sample_categorical}\\
    a_{k, i} &\sim {\rm Bernoulli}(p_d), \quad i=1,\ldots, d_{\theta}, \label{eq: sample_mask}\\
    \theta_k &= a_k \odot \theta_k^\star, \label{eq: sample_param}\\
    y &\sim  p_k(y|x, \theta_k). \label{eq: sample_pred}
\end{align}
In this procedure, one first samples the mixing component $k$ via \eqref{eq: sample_categorical}. Then, one draws a sample from the approximate posterior distribution of the parameters of the $k$-th mixture via \eqref{eq: sample_mask}-\eqref{eq: sample_param}. Finally, a prediction can be sampled via \eqref{eq: sample_pred}. We refer the reader to the Supplementary Material, Section~\ref{sec:supp-mat-sampling} for details on the validity of this sampling procedure.

\subsection{Theoretical Insights} \label{sec:theory}

In this section we provide insights into the connection between DGMEs and DEs, along with general results on convergence of our training procedure using DGMEs. We refer the reader to the Supplementary Material, Section~\ref{sec:app-proofs} for details on the proofs of each propositions.

Proposition~\ref{prop:max-lower-bound} shows that maximizing the data likelihood directly as in DGMEs achieves an equal or better likelihood than maximizing each ensemble member's likelihood separately as in DEs.

\begin{proposition} \label{prop:max-lower-bound}
Under the assumption that $\pi_i = 1/K$ for $i=1,..,K-1$,
maximizing the Gaussian mixture data likelihood directly achieves better or equal joint likelihood than maximizing each ensemble member's likelihood separately.
\end{proposition}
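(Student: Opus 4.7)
The plan is to show the claimed inequality via Jensen's inequality applied to the log of the uniform mixture density, then invoke the definition of the argmax. Under the hypothesis $\pi_i = 1/K$ for $i = 1, \ldots, K-1$ combined with the simplex constraint $\sum_i \pi_i = 1$, we also have $\pi_K = 1/K$, so the DGME model reduces to a uniformly weighted mixture. Let
$$L_{\text{GMM}}(\theta) = \sum_{n=1}^N \log\!\left(\frac{1}{K}\sum_{k=1}^K \exp(\ell_{\theta_k}(x_n, y_n))\right)$$
denote the joint data log-likelihood under this mixture (the DGME objective), and let
$$L_{\text{DE}}(\theta) = \sum_{k=1}^K \sum_{n=1}^N \ell_{\theta_k}(x_n, y_n)$$
denote the sum of per-member log-likelihoods that standard DE training maximizes, noting that $L_{\text{DE}}$ decomposes across $k$ into the $K$ independent ensemble-member fits performed in DE.

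First, I would apply Jensen's inequality (using concavity of $\log$) to the inner mixture sum at each sample, obtaining
$$\log\!\left(\frac{1}{K}\sum_{k=1}^K \exp(\ell_{\theta_k}(x_n, y_n))\right) \;\geq\; \frac{1}{K}\sum_{k=1}^K \ell_{\theta_k}(x_n, y_n).$$
Summing over $n$, this yields $L_{\text{GMM}}(\theta) \geq \tfrac{1}{K}\, L_{\text{DE}}(\theta)$ for every $\theta$, making precise the claim in the introduction that the DE training loss is (a scaled) lower bound on the true mixture log-likelihood. Next, let $\theta^{\text{DGME}} \in \argmax_{\theta} L_{\text{GMM}}(\theta)$ and $\theta^{\text{DE}} \in \argmax_{\theta} L_{\text{DE}}(\theta)$. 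Since $\theta^{\text{DGME}}$ is by construction a maximizer of $L_{\text{GMM}}$, we trivially have $L_{\text{GMM}}(\theta^{\text{DGME}}) \geq L_{\text{GMM}}(\theta^{\text{DE}})$, which is exactly the proposition.

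The main obstacle is not technical but conceptual: DE training does not natively produce a mixture model, so the comparison of joint likelihoods requires embedding the trained ensemble into the uniform-weight mixture that DGME optimizes. Once this identification is made, the argument collapses to concavity of $\log$ plus the trivial fact that the maximum over a domain dominates the value at any particular point. I would close by noting the equality case in Jensen: the inequality is tight only when every mixture component assigns identical likelihood to each training example, which generically fails when the ensemble members are initialized differently; this is consistent with the empirical finding reported later in the paper that DGMEs strictly improve over DEs on the training likelihood.
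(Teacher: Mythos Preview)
Your proof is correct and follows essentially the same approach as the paper: apply Jensen's inequality to the log of the uniform mixture to exhibit the DE objective as a (scaled) lower bound on the mixture log-likelihood, then invoke the definition of the argmax to conclude that the DGME maximizer dominates the DE maximizer on $L_{\text{GMM}}$. Your presentation is in fact a bit cleaner than the paper's---you correctly carry the $1/K$ factor through Jensen and make explicit that the final inequality $L_{\text{GMM}}(\theta^{\text{DGME}}) \geq L_{\text{GMM}}(\theta^{\text{DE}})$ is immediate from optimality once the DE ensemble is embedded in the uniform-weight mixture.
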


\textit{Proof Sketch.} The result can be obtained by using Jensen's inequality on the joint log-likelihood of equation~\eqref{eq: exact_data_likelihood_max} along with the assumption.

Next, Proposition~\ref{prop:em-convergence} combines recent results on neural network convergence in regression by \citet{arora2019fine} and \citet{farrell2021deep} with classical EM analysis \citep{wu1983convergence} to give intuition on why DGMEs should converge towards the maximum of the data likelihood\footnote{We note Proposition~\ref{prop:em-convergence} covers a specific setup, in which mean and variance function estimation is performed separately by using a shared pre-trained feature extraction layer and that the true data generating process is identifiable with a mixture model to begin with. A more thorough investigation on both using a separate neural network from mean and variance, as well as the under- or over-specified case, is outside of the scope of this paper.}.

\begin{assumption}[Non-flatness of the weighted log-likelihood] \label{assumption-min-likelihood}
    Given a DGMEs with $K$ mixtures, in each EM round $t$ there exists an $\epsilon_{t,k}$ such that:

    \begin{equation*}
        \sum_{n=1}^N \gamma_{k,n}\left( \ell_{\theta^{*}}(x_n, y_n) -  \ell_{\theta^{(t)}}(x_n, y_n) \right) \geq \frac{\epsilon_{t,k}}{K},
    \end{equation*}
    where $\theta^{*}_k = \argmax_{\theta \in \Theta} \sum_{n=1}^N \gamma_{k,n} \ell_{\theta}(x_n, y_n)$. Let $\epsilon = \min_{t \in T, k \in K} \epsilon_{t,k}$.
\end{assumption}

\begin{assumption}[Smoothness of the true mean function] \label{assumption-true-function-mean}
Let $\mu(x): \mathcal{X} \to \mathbb{R}$ be the true mean function and let $X \subset \mathcal{X}$. Assume there exists some $\beta \in \mathbb{N}^+$ such that $\mu(x) \in \mathcal{W}^{\beta, \infty}(X)$, where $\mathcal{W}^{\beta, \infty}(X)$ is a $(\beta, \infty)$-Sobolev ball.
\end{assumption}

\begin{assumption}[Smoothness of the true variance function] \label{assumption-true-function-variance}
Let $\sigma(x): \mathcal{X} \to \mathbb{R}^{+}$ be the true variance function and let $X \subset \mathcal{X}$. Let $\mathbf{H}^{\infty}$ be the Graham matrix as defined by \citet[Equation 12]{arora2019fine}, and assume that there exists an $M \in \mathbb{R}$ such that $\sigma(x)^T \left(\mathbf{H}^\infty\right) \sigma(x)^T  \leq M$ for some $M \in \mathbb{R}$.
\end{assumption}

\begin{assumption}[Non-degenerate weights] 
\label{assumption-non-degenerate-weights}
In each EM iteration, the weights are positive and bounded away from zero, e.g., $\pi^{(t)}_i > \xi^{(t)}_i > 0$.
\end{assumption}

\begin{proposition} \label{prop:em-convergence}
Under assumptions \ref{assumption-min-likelihood}, \ref{assumption-true-function-mean}, \ref{assumption-true-function-variance} and \ref{assumption-non-degenerate-weights}, let the mean and variance in each ensemble model be estimated via a separate 2-layer deep ReLu network from a common feature extraction layer. Then the DGMEs EM algorithm convergences to a non-stationary point that maximizes the data likelihood with high probability.
\end{proposition}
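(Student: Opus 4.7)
The plan is to combine three ingredients: the classical monotone-ascent analysis of EM as in \citet{wu1983convergence}, the neural-network approximation guarantees of \citet{arora2019fine} and \citet{farrell2021deep} that govern each inner optimization, and the per-iteration improvement floor supplied by Assumption~\ref{assumption-min-likelihood}. The target is to show that $\{\log p_{\theta^{(t)}}(Y|X)\}_t$ is monotone and bounded above, hence convergent, and moreover that the limit is a stationary point of the data log-likelihood.

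First, I would reproduce the standard EM ascent inequality for DGMEs. Using Jensen's inequality on the representation of the marginal log-likelihood as an expectation over $Z$ gives $\log p_{\theta}(Y|X) - \log p_{\theta'}(Y|X) \geq Q(\theta, \theta') - Q(\theta', \theta')$, so any $\theta$ that raises $Q$ above $Q(\theta', \theta')$ raises the marginal log-likelihood by at least the same amount. The E-step in \eqref{eq: posterior_updates} is an exact Bayes update, and the closed-form weight update $\pi_k^\star = N^{-1}\sum_n \gamma_{k,n}$ exactly maximizes $Q$ over the simplex by a Lagrange-multiplier calculation, so no approximation error enters from either of these steps.

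The core technical step is the stochastic M-step on the neural-network parameters in \eqref{eq: weighted_nll}. Here I would invoke Assumptions~\ref{assumption-true-function-mean} and \ref{assumption-true-function-variance}: applying the NTK-style overparameterized regression guarantees of \citet{arora2019fine} on top of a shared pre-trained feature-extraction layer to the mean head, and the Sobolev-class ReLU approximation bounds of \citet{farrell2021deep} to the variance head, each per-component weighted negative log-likelihood can be driven, with probability at least $1-\delta_t$, arbitrarily close to its optimum $\sum_n \gamma_{k,n}\ell_{\theta_k^{*}}(x_n,y_n)$. Combined with Assumption~\ref{assumption-min-likelihood}, this yields an unconditional per-iteration improvement of at least $\epsilon/K$ per component and hence at least $\epsilon$ in $Q$, which through the EM ascent inequality translates into the same $\epsilon$ floor on the increase of $\log p_{\theta^{(t)}}(Y|X)$. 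Assumption~\ref{assumption-non-degenerate-weights} keeps the $\gamma_{k,n}$ well-defined and the per-component optimizations well-posed across iterations.

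To conclude, once variance outputs are parameterized by a softplus head as in \citet{lakshminarayanan2017de}, the data log-likelihood is bounded above, so the monotone sequence of likelihood values converges; the $\epsilon$-floor then forces the iterates to reach a point where further ascent is impossible, i.e., a stationary point of $\log p_\theta(Y|X)$ in the sense of \citet{wu1983convergence}. A union bound over the finitely many EM rounds converts per-iteration high-probability bounds into a single global high-probability guarantee. The principal obstacle, and where care is needed, is controlling error propagation of the non-convex inner optimization across EM rounds: Assumptions~\ref{assumption-true-function-mean}--\ref{assumption-true-function-variance} essentially reduce each inner problem to the overparameterized-regression regime, but verifying that reweighting by $\gamma_{k,n}$ does not distort the Graham-matrix conditioning required by \citet{arora2019fine} is the subtle point, and would require either a mild uniform lower bound on the effective weighted sample size $\sum_n \gamma_{k,n}$ (which follows from Assumption~\ref{assumption-non-degenerate-weights} and boundedness of the likelihood ratios) or an adaptation of the NTK argument to weighted design matrices.
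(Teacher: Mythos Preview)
Your overall strategy matches the paper's: invoke \citet{wu1983convergence} to reduce convergence to showing $Q(\theta^{(t+1)};\theta^{(t)}) - Q(\theta^{(t)};\theta^{(t)}) > 0$ at non-stationary points, then use the neural-network convergence results to certify that the inner M-step attains (to within high probability) the per-component optimum needed to trigger the $\epsilon$-floor from Assumption~\ref{assumption-min-likelihood}. The paper's proof follows exactly this skeleton, with the same decomposition $\|\ell(\mu^*,\sigma^*)-\ell(\widehat\mu,\widehat\sigma)\|_2 \le \|\ell(\mu^*,\widehat\sigma)-\ell(\widehat\mu,\widehat\sigma)\|_2 + \|\ell(\mu^*,\sigma^*)-\ell(\mu^*,\widehat\sigma)\|_2$ into a mean-estimation term and a variance-estimation term.

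There is, however, a concrete mismatch in how you pair the cited results with the two heads. Assumption~\ref{assumption-true-function-mean} places the \emph{mean} function in a Sobolev ball $\mathcal{W}^{\beta,\infty}(X)$, which is precisely the hypothesis of \citet{farrell2021deep}; Assumption~\ref{assumption-true-function-variance} imposes the Gram-matrix bound $\sigma(x)^{\top}(\mathbf{H}^{\infty})\sigma(x)\le M$, which is the setting of \citet{arora2019fine}. You apply Arora to the mean head and Farrell to the variance head, which is the reverse of what the assumptions license. With the swap, neither invocation is justified: the NTK argument of Arora requires the Gram-matrix condition you only have on $\sigma$, and the Sobolev approximation bound of Farrell requires the smoothness you only have on $\mu$. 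The paper's proof assigns Farrell to the mean (the weighted-least-squares loss fits their Assumption~2.1) and Arora to the variance, yielding sample complexities $n>\mathcal{O}(\log(1/\delta)/\epsilon)$ and $n>\mathcal{O}(\log(1/\delta)/\epsilon^2)$ respectively. Once you correct this pairing, your argument goes through and is in fact somewhat more careful than the paper's on points like the union bound over rounds and the effect of the $\gamma_{k,n}$ reweighting on the design.
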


\textit{Proof Sketch.} The result follows if one shows that $Q(\theta; \theta^{(j)})$ is an increasing function of the EM steps $j$ (\citealt{wu1983convergence}), for parameter values $\theta^{(j)}$ that are not stationary points of $Q(\theta; \theta^{(j)})$. In the DGMEs case, this corresponds to proving that the weighted log-likelihood in each ensemble increases at every round $j$. The result follows by combining assumptions on the non-flatness of the weighted log-likelihood (A.3), the smoothness of true mean function (A.4) and the smoothness of the true variance function (A.5) with the results obtained about convergence of deep ReLU networks by \citet{farrell2021deep} and \citet{arora2019fine} respectively.

Finally, Proposition~\ref{prop:conn-under-initialization} connects DGMEs and DEs, showing DEs is equivalent to a single-EM-step of DGMEs under specific neural network weights initialization. As shown in Proposition~\ref{prop:em-convergence}, the EM training of DGME improves the function $Q$ at each iteration $t$, i.e., $Q\left(\theta^{(t+1)}, \theta^{(t)}\right) \geq Q\left(\theta^{(t)}, \theta^{(t)}\right)$. Hence, the final joint DGME likelihood will be larger or equal to the joint likelihood achieved by DE.

\begin{proposition} \label{prop:conn-under-initialization}
If the weights of each ensemble member are initialized to 0 with fixed bias terms, a single EM step for DGMEs is equivalent to perform DEs.
\end{proposition}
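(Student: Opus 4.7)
The plan is to show that, under the zero-weight fixed-bias initialization, the first EM step of DGMEs reduces to the independent per-member heteroscedastic NLL training that defines DEs. The argument proceeds in three short steps: analyze the E-step, analyze the M-step, and check that the optimization decouples across ensemble members.

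First, I would observe that if every weight in $\theta_k^{(0)}$ is zero and the biases are fixed to common values, then every ensemble member computes the same forward map and produces identical outputs $\mu_0(x)$ and $\sigma_0^2(x)$ that depend on $x$ only through the fixed bias pattern but not on the index $k$. Together with the uniform initial mixture weights $\pi_k^{(0)} = 1/K$ specified in Algorithm~\ref{alg: dgme}, the E-step update in equation~\eqref{eq: posterior_updates} collapses to
\begin{equation*}
\gamma_{k,n}^{(1)} = \frac{(1/K)\,\mathcal{N}(y_n;\mu_0(x_n),\sigma_0^2(x_n))}{\sum_{j=1}^K (1/K)\,\mathcal{N}(y_n;\mu_0(x_n),\sigma_0^2(x_n))} = \frac{1}{K},
\end{equation*}
since every summand in the denominator equals the numerator.

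Plugging $\gamma_{k,n}^{(1)} = 1/K$ into the M-step, the mixture-weight update gives $\pi_k^{(1)} = \frac{1}{N}\sum_{n=1}^N (1/K) = 1/K$, so the mixture weights remain uniform. The parameter update in equation~\eqref{eq: weighted_nll} becomes
\begin{equation*}
\theta_k^{(1)} = \argmin_{\theta_k\in\Theta_k} \frac{1}{K}\sum_{n=1}^N \left( \log \sigma_{\theta_k}^2(x_n) + \frac{(y_n - \mu_{\theta_k}(x_n))^2}{\sigma_{\theta_k}^2(x_n)}\right),
\end{equation*}
and because the positive constant $1/K$ does not affect the argmin, this is precisely the heteroscedastic NLL objective that each member of a DE independently minimizes. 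Since the loss further decouples across $k$, the $K$ networks are optimized independently, exactly matching the DE training procedure.

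The main subtlety I would flag is that the equivalence is at the level of the training objective and procedure rather than a literal pointwise identity of the fitted weights. Because all members start from the same zero-weight point, only the stochasticity of the inner optimizer (e.g., Adam minibatch ordering, dropout masks where applicable) breaks the symmetry between members; however, this is the same source of diversity that a zero-initialized DE would rely on, so both algorithms produce jointly identically distributed ensembles. I do not anticipate any deep technical obstacle beyond stating the initialization assumption carefully and tracking the cancellation in the E-step denominator.
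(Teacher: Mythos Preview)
Your proposal is correct and follows essentially the same argument as the paper: the zero-weight, fixed-bias initialization makes all members output the same $\mu$ and $\sigma^2$, so the E-step yields $\gamma_{k,n}=1/K$, and the M-step reduces to independent per-member NLL minimization. You are in fact slightly more thorough than the paper's own proof---you also verify that the mixture-weight update stays at $1/K$ and you flag the symmetry-breaking subtlety---but the route is the same.
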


\textit{Proof Sketch.} The initialization schema implies that mixture membership is equal across samples in the first expectation round of the EM. Hence, the first M-step consists in training $K$ separate networks with each log-likelihood contribution being weighted equally.

\section{Experiments}
\label{experiments}
We evaluate the empirical performance of DGMEs via three different numerical experiments. We compare our method to the MDNs \citep{bishop1994mixture}, MCD \citep{gal2016dropout}, and DEs \citep{lakshminarayanan2017de}. MCD and DEs are widely considered to be state-of-the-art solutions for quantifying predictive uncertainty in deep learning models and have repeatedly been used as baselines for developing new techniques. Additional results and figures can be found in the Supplementary Material, Section~\ref{sec:exp-supp}. A summary table qualitatively comparing DGMEs to the benchmarks can be found in the Supplementary Material, Section~\ref{sec:supp-mat-comp}.

\subsection{Toy Regression}
\label{experiments:toy}
Consider the following model:
\begin{equation}
    y_n = u_nx_n^3 + \epsilon_n,
\end{equation}
where $u_n\in\{-1, 1\}$ with $p_u\triangleq P(u_n=-1)$ and $\epsilon_n\sim p(\epsilon)$ for all $n=1,\ldots,N$. We generate $N=800$ training samples from this model for the training set, where the input values $x_n$ range from -4 to 4. For each considered setting, we use a learning rate of $\eta=0.01$, a batch size of 32, and $E=80$ epochs to resolve the stochastic optimization problem in the M-step. For each method, we utilize a dropout probability $p_d=0.1$ to account for epistemic uncertainty. Additionally, we generate data from this toy model under three different noise settings to demonstrate the flexibility and expressive power of DGMEs as compared to other baselines. Unless otherwise stated, we assume $K=5$ networks in each mixture model-based approaches (i.e., MDNs, DEs, and DGMEs). Experimental results are described below for each noise scenario. Additional experimental results and ablation studies are provided in the Supplementary Material, Section~\ref{app: toy_regression}.

\begin{figure*}[!ht]
    \centering
    \includegraphics[width=\linewidth, trim={0 0cm 0cm 0}, clip]{./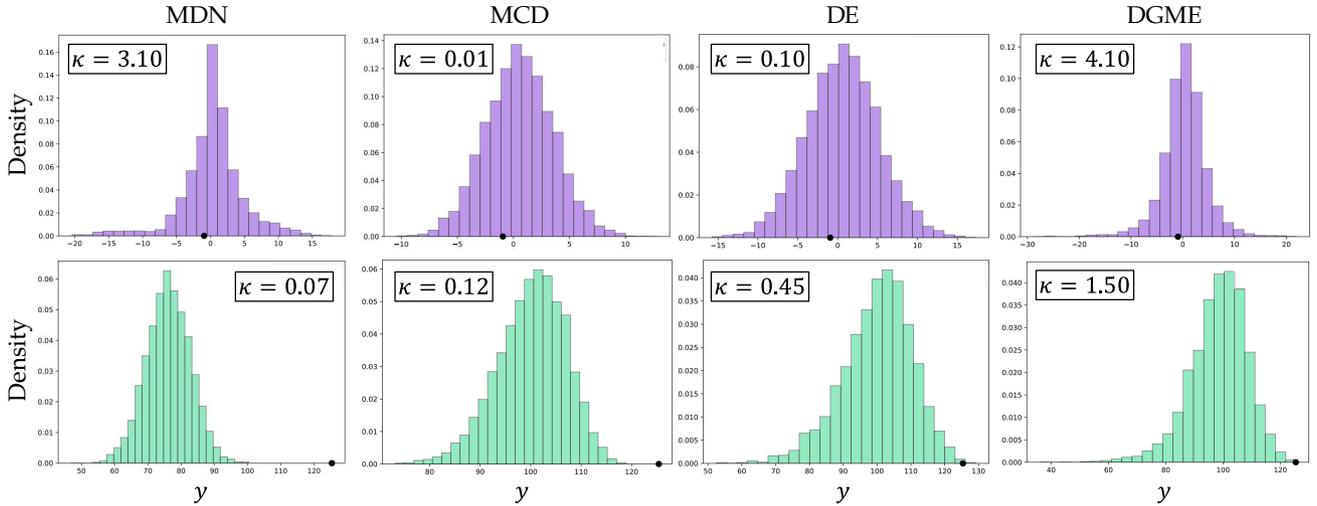}
    \caption{Histogram of samples from the predictive distributions for a single training example (top panel) and for a single test example (bottom panel) from the heavy-tailed toy regression example, shown with corresponding sample kurtosis value $\kappa$. DGMEs generally estimate heavier tailed predictions for both training and test samples, while baseline approaches samples are closer to following a Gaussian distribution.}
    \label{fig: sota_comparison_heavy_tailed}
\end{figure*}

\begin{figure*}[!ht]
    \centering
    \includegraphics[width=0.9\linewidth, trim={0 17.6cm 0 0}, clip]{./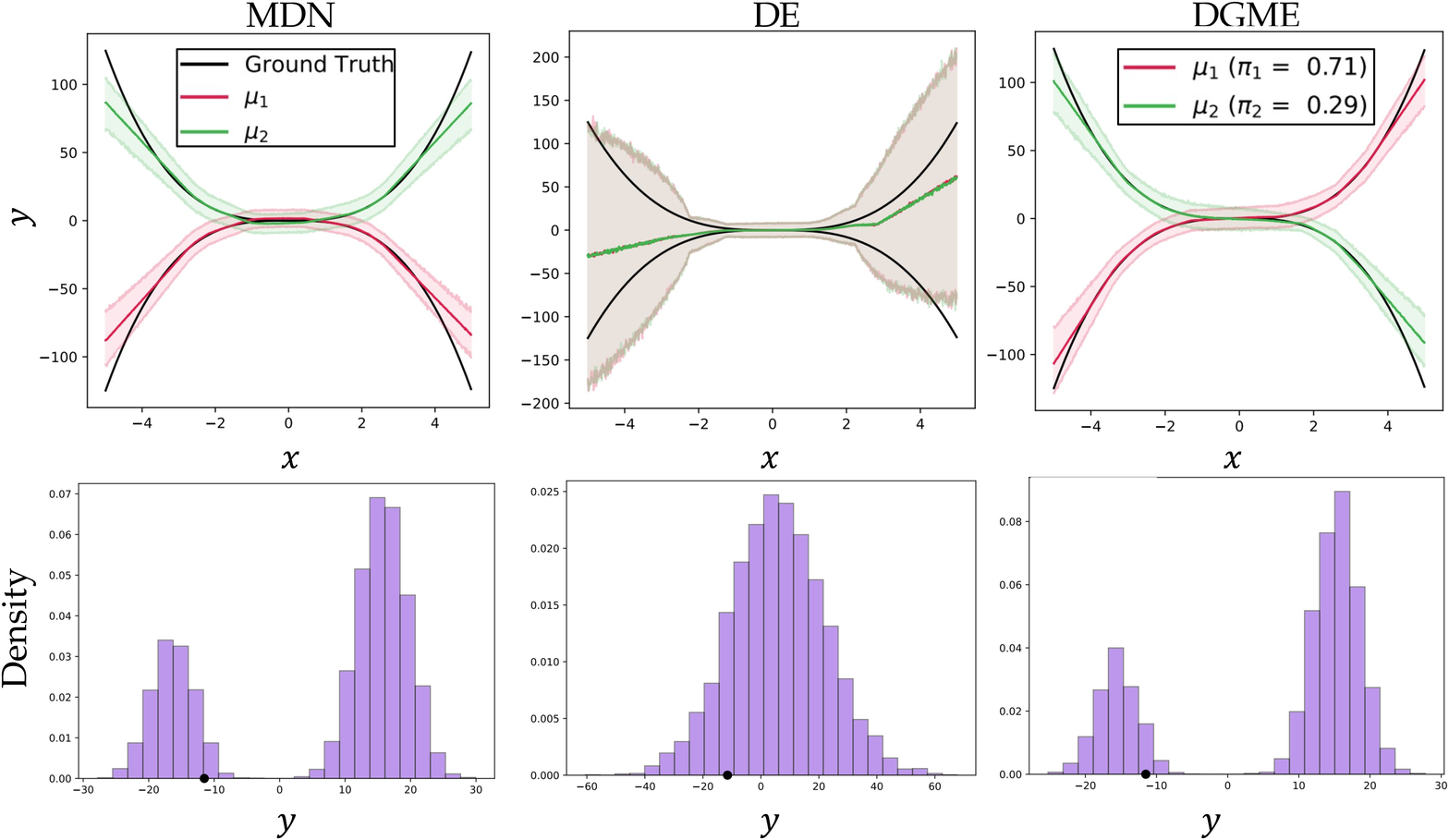}
    \caption{Predictive distribution plots for the bimodal Gaussian toy regression example. DEs cannot capture the multimodality of the noise, while MDNs and DGMEs can. Furthermore, DGMEs approximate the mixture weights of the noise accurately (ground truth: $\pi_1=0.7$ and $\pi_2=0.3$).}
    \label{fig: sota_comparison_bimodal}
\end{figure*}

\begin{table*}[t]
    \resizebox{\textwidth}{!}{
\begin{tabular}{lccccccc}
\toprule
\multicolumn{8}{c}{\textbf{\textsc{Test RMSE}}}\\
\midrule
Dataset        &          MDNs &           MCD &           DEs &   DGMEs (J=1) &   DGMEs (J=2) &   DGMEs (J=5) &  DGMEs (J=10) \\
\midrule
Boston housing &  \bf 2.79 $\pm$ 0.84 &  2.97 $\pm$ 0.85  &  3.28 $\pm$ 1.00 &  3.11 $\pm$ 0.94 &  3.00 $\pm$ 0.90 &  2.87 $\pm$ 0.86 &    2.83 $\pm$ 0.91 \\
Concrete       &  5.21 $\pm$ 0.56 &   5.23 $\pm$ 0.53 &  6.03 $\pm$ 0.58 &  5.67 $\pm$ 0.57 &  5.36 $\pm$ 0.51 &  5.20 $\pm$ 0.59 &  \bf 5.14 $\pm$ 0.58 \\
Energy         & \bf 0.71 $\pm$ 0.14 &   1.66 $\pm$ 0.19 &  2.09 $\pm$ 0.29 &  2.01 $\pm$ 0.29 &  1.79 $\pm$ 0.24 &  1.22 $\pm$ 0.25 &  1.07 $\pm$ 0.41 \\
Kin8nm         &  0.08 $\pm$ 0.00 &   0.10 $\pm$ 0.00 &  0.09 $\pm$ 0.00 &  0.08 $\pm$ 0.00 &  0.08 $\pm$ 0.00 &  \bf 0.07 $\pm$ 0.00 &  \bf 0.07 $\pm$ 0.00  \\
Power plant    &  4.12 $\pm$ 0.17 &   \bf 4.02 $\pm$ 0.18 &  4.11 $\pm$ 0.17 &  4.12 $\pm$ 0.16 &  4.10 $\pm$ 0.15 &  4.07 $\pm$ 0.15 &     4.05 $\pm$ 0.13 \\
Wine           &  0.66 $\pm$ 0.04 &  \bf 0.62 $\pm$ 0.04 &  0.64 $\pm$ 0.04 &  0.63 $\pm$ 0.04 &  0.64 $\pm$ 0.04 &  0.64 $\pm$ 0.04 &      0.66 $\pm$ 0.05 \\
Yacht          &  0.96 $\pm$ 0.36 &   1.11 $\pm$ 0.38 &  1.58 $\pm$ 0.48 &  0.98 $\pm$ 0.38 &  0.85 $\pm$ 0.36 &  0.83 $\pm$ 0.40 &     \bf 0.70 $\pm$ 0.26 \\
\bottomrule
\end{tabular}
}
\caption{Average RMSE of test examples for regression experiments on real datasets. DGMEs obtain competitive or better performance in terms of RMSE on the majortity of datasets as compared to the baselines.}\label{tab:regression_experiments_RMSE}
\end{table*}

\begin{table*}[t]
    \centering
    \resizebox{\textwidth}{!}{
\begin{tabular}{lccccccc}
\toprule
\multicolumn{8}{c}{\textbf{\textsc{Test NLL}}}\\
\midrule
Dataset             &           MDNs &           MCD &            DEs &    DGMEs (J=1) &    DGMEs (J=2) &    DGMEs (J=5) &  DGMEs (J=10) \\
\midrule
Boston housing      &   2.62 $\pm$ 0.43 &  2.46 $\pm$ 0.25  &   2.41 $\pm$ 0.25 &   2.34 $\pm$ 0.19 &  \bf 2.33 $\pm$ 0.22 &   2.41 $\pm$ 0.25 &     2.46 $\pm$ 0.31 \\
Concrete            &   3.11 $\pm$ 0.26 &   3.04 $\pm$ 0.09 &   3.06 $\pm$ 0.18 &   3.04 $\pm$ 0.11 &   3.00 $\pm$ 0.12 &   2.95 $\pm$ 0.13 & \bf 2.94 $\pm$ 0.14 \\
Energy              &  \bf 1.18 $\pm$ 0.30 &   1.99 $\pm$ 0.09 &   1.38 $\pm$ 0.22 &   1.71 $\pm$ 0.19 &   1.48 $\pm$ 0.15 &   1.20 $\pm$ 0.23 &  1.20 $\pm$ 0.40 \\
Kin8nm              &  -1.18 $\pm$ 0.04 &  -0.95 $\pm$ 0.03 &  -1.20 $\pm$ 0.02 &  -1.20 $\pm$ 0.02 &  -1.23 $\pm$ 0.03 & -1.24 $\pm$ 0.02 &  \bf   -1.25 $\pm$ 0.02 \\
Power plant         &   2.81 $\pm$ 0.04 &   2.80 $\pm$ 0.05 & \bf 2.79 $\pm$ 0.04 &   2.82 $\pm$ 0.03 &   2.81 $\pm$ 0.03 &   2.81 $\pm$ 0.03 &           \bf 2.79 $\pm$ 0.03 \\
Wine                &   1.01 $\pm$ 0.10 &  \bf 0.93 $\pm$ 0.06 &   0.94 $\pm$ 0.12 &   0.95 $\pm$ 0.11 &   0.96 $\pm$ 0.11 &   0.96 $\pm$ 0.12 &    1.10 $\pm$ 0.09 \\
Yacht               &   1.18 $\pm$ 0.17 &   1.55 $\pm$ 0.12 &   1.18 $\pm$ 0.21 &   1.07 $\pm$ 0.22 &   0.75 $\pm$ 0.22 &  0.60 $\pm$ 0.29 &  \bf   0.49 $\pm$ 0.29 \\
\bottomrule
\end{tabular}
}
\caption{Average NLL of test examples for regression experiments on real datasets. DGMEs obtain competitive or better performance in terms of NLL on the majority of datasets as compared to the baselines.}\label{tab:regression_experiments_NLL}
\end{table*}

\paragraph{Case 1 - Gaussian Noise:}
 We set $p_u=0$ and assume that the noise is zero-mean and Gaussian distributed with variance of $9$. This is analogous to the setup utilized in \cite{hernandez2015probabilistic}. Figure~\ref{fig: sota_comparison_gauss} (Supplementary Material, Section~\ref{sec: supp-mat-additive-gaussian}) shows the performance of DGMEs as compared to the baselines, where we observe that it outperforms MDNs and obtains comparable results to MCD and DEs. 
\paragraph{Case 2 - Heavy-tailed Noise:}
We set $p_u=0$ and assume that the noise distributed according to a zero-mean Student-t distribution with $\nu=3$ degrees of freedom with variance of $9$. Figure \ref{fig: sota_comparison_heavy_tailed} shows the histogram of samples from the predictive distribution of both a training and a test input with their corresponding sample (excess) kurtosis. We observe that on the training examples (i.e., purple histograms), only MDNs and DGMEs are able to learn the heavy-tailedness of the noise, as both MCD and DEs obtain a kurtosis close to 0. Unlike the baseline approaches, which are unable to learn the tail behavior in the test example, we observe that DGMEs is the best method at capturing the heavy-tailedness of the test examples, as it gives the largest corresponding kurtosis. 
\paragraph{Case 3 - Bimodal Gaussian Noise:}
We set $p_u=0.3$ and assume that the noise is zero-mean and Gaussian distributed with variance of $9$. For this example, we only compare the mixture-based approaches assuming $K=2$ components. Figure \ref{fig: sota_comparison_bimodal} shows the predictive density for the corresponding 99\% credible interval for each mixture in each approach, where for DGMEs we also show the learned mixture weights of each component. We observe that only MDNs and DGMEs are able to capture the bimodality of the data, with DGMEs also accurately capturing the mixture weight proportions. DEs instead overestimates the heteroscedastic variance in each network. This is due to the fact that DEs train each ensemble member independently under the assumption of Gaussian likelihood. We also show that DGMEs can robustly estimate this bimodality, even if the assumed number of mixture components is larger than 2 (see Supplementary Material, Section~\ref{sec:supp-mat-ablation-mixturen}).

\subsection{Regression on Real Datasets}
We evaluate the performance of DGMEs in regression against MDNs, MCD and DEs on a set of UCI regression benchmark datasets \citep{Dua2019UCI}; see Supplementary Material, Section~\ref{app: regression}, for further details on the datasets. 
We use the experimental setup used in \cite{hernandez2015probabilistic}, with each dataset split into 20 train-test folds. We use the same network architecture across each dataset: an MLP with a single hidden layer and ReLU activations, containing 50 hidden units. For each dataset we train for $E=40$ total epochs with a batch size of 32 and a learning rate of $\eta=0.001$. To be consistent with previous evaluations, we used $K=5$ networks in our ensemble and provide results for DGMEs for different numbers of EM steps $J\in\{1, 2, 5, 10\}$. Our results are shown in Tables \ref{tab:regression_experiments_RMSE} and \ref{tab:regression_experiments_NLL}, where we evaluate the root-mean-squared error (RMSE) and the negative log-likelihood (NLL) on the test set averaged over the different folds, respectively. In the same table, we also report the results for MDNs, MCD and DEs. Experimental results for MCD and DEs can also be found in their respective papers \citep{gal2016dropout, lakshminarayanan2017de}. Note that in this experiment, we do not apply dropout to MDNs and DGMEs and only account for the uncertainty obtained from training the models to maximize the NLL of the samples according to the Gaussian mixture assumption in \eqref{eq: mixture_of_gaussains_assumptions}.

We observe that in this experiment DGMEs are able to obtain competitive (or better) performance  with respect to the baseline methods. For certain datasets, we observe that increasing the number of EM steps greatly improves the performance (e.g., Concrete, Energy, Power Plant, and Yacht). We can see that this is not generally true for all datasets: for example, for the Boston housing dataset, increasing the number of EM steps begins to degrade the performance of the model in terms of NLL. We emphasize that performance can further be improved by incorporating dropout in the training procedure, where the dropout probability $p_d$ can be selected using cross-validation on each train-test split.

\subsection{Financial Time Series Forecasting}
For the final experiment, we focus on the task of one-step-ahead forecasting for financial time series. In particular, using historical daily price data from Yahoo finance \footnote{\url{https://finance.yahoo.com/}}, we formulate a one-step ahead forecasting problem using a long short-term memory (LSTM) network \citep{LSTMhochreiter}. The input to the network is a time series that represents the closing price of a particular stock over the past 30 trading days. The target output is the next trading day’s closing price. We assess performance of the model using two metrics: (1) the NLL of the test set, and (2) the RMSE score on the test set. We evaluate each method on three different datasets:
\begin{itemize}
    \item {\bf GOOG - stable market regime:} We use training data from the Google (GOOG) stock from the period of Jan 2019 - July 2022 and test on GOOG stock data from the period of August 2022 - January 2023. 
    \item {\bf RCL - market shock regime:} We use training data from the Royal Caribbean (RCL) stock from the period of Jan 2019 - April 2020 and test on RCL stock data from the period of May 2020 - September 2020. 
    \item {\bf GME - high volatility regime:} We use the training data from the Gamestop (GME) stock during the ``bubble" period of Nov 2020 - Jan 2022 and test on GME stock data following that period.
\end{itemize}
We ran each of the previously tested baselines and DGMEs on the three scenarios previously described. Additionally, we also test the MultiSWAG approach highlighted in \cite{NEURIPS2020MultiSWAG}, due to its effectiveness in quantifying epistemic uncertainty, which is of particular importance for the market shock regime \citep{chandra2021bayesian}. We train each model on each dataset for 5 independent runs and report the mean and standard error of both the test NLL and the test RMSE in Tables \ref{tab:forecasting_experiments_RMSE} and \ref{tab:forecasting_experiments_NLL},  where we have bolded the best performing method in each experiment according to the mean value of the metric. For details on selection of hyperparameters of each of the methods, we refer the reader to the Supplementary Material, Section~\ref{sec:supp-mat-hyperparam-tuning-finance}.

\begin{table}[t]
\centering
\resizebox{0.5\textwidth}{!}{
\begin{tabular}{@{}clllll@{}}
\toprule
\multicolumn{6}{c}{\textbf{Test RMSE}}                                                                                                          \\ \midrule
Dataset & \multicolumn{1}{c}{MDNs} & \multicolumn{1}{c}{MCD} & \multicolumn{1}{c}{DEs}  & \multicolumn{1}{c}{MultiSWAG} & \multicolumn{1}{c}{DGMEs} \\ \midrule
GOOG    & $2.74 \pm 0.06$         & $3.86 \pm 0.16$         & $2.73 \pm 0.03$          & $2.71 \pm 0.05$  & ${\bf 2.71 \pm 0.04}$               \\
RCL     & $15.01 \pm 4.71$        & $16.19 \pm 10.18$       & $14.92 \pm 1.44$              & ${\bf 11.73 \pm 0.45}$   & $14.49 \pm 2.73$       \\
GME     & $11.14 \pm 7.75$        & $2.70 \pm 0.47$         & $3.21\pm0.46$                    & ${\bf 2.00 \pm 0.06}$    & $3.19 \pm 0.33$     \\ \bottomrule
\end{tabular}}
\caption{Average RMSE of the test examples for the financial forecasting experiment.}
\label{tab:forecasting_experiments_RMSE}
\end{table}

\begin{table}[t]
\centering
\resizebox{0.5\textwidth}{!}{\begin{tabular}{@{}cccccc@{}}
\toprule
\multicolumn{6}{c}{\textbf{Test NLL}}                                                                           \\ \midrule
Dataset & MDNs               & MCD             & DEs                      & MultiSWAG    & DGMEs        \\ \midrule
GOOG    & $2.46 \pm 0.03$   & $2.98 \pm 0.01$ & $2.44 \pm 0.01$ & ${2.54 \pm 0.00}$ & ${\bf 2.43 \pm 0.02}$    \\
RCL     & $18.83 \pm 17.82$ & $6.12 \pm 3.93$ & $5.94 \pm 0.80$  & ${6.21 \pm 0.18}$ & ${\bf 5.00 \pm 0.76}$   \\
GME     & $6.01 \pm 3.85$   & $2.46 \pm 0.16$ & $2.66 \pm 0.13$   & ${\bf 2.14 \pm 0.03}$ & $2.61 \pm 0.31$   \\ \bottomrule
\end{tabular}}
\caption{Average NLL of the test examples for the financial forecasting experiment.}
\label{tab:forecasting_experiments_NLL}
\end{table}

The results indicate that for the GOOG dataset, DGMEs achieve, on average, the best NLL and RMSE score. In the case of the RCL dataset, we observe an interesting result. DGMEs attain the best performance in terms of NLL, but MultiSWAG does best in terms of RMSE. We believe DGMEs outperform in terms of NLL because the likelihood function assumed by DGMEs is a true Gaussian mixture, while the MultiSWAG approach is applying stochastic weight averaging Gaussian (SWAG) independently on multiple networks under the Gaussian likelihood assumption. This gives DGMEs the advantage in terms of learning the complex nature of the RCL dataset. On the other hand, we have found that since MultiSWAG is accounting for uncertainty using SWAG, it appears to make model training more stable (hence the smaller standard error on each of the metrics) and better accounts for epistemic uncertainty. This could possible explain why the RMSE score is lower than that of DGMEs and with smaller standard error. For the GME dataset, MultiSWAG outperforms DGMEs consistently, and with tighter standard error bars.  As a final remark, we emphasize that in our paper, we have accounted for epistemic uncertainty in DGMEs using dropout, but other methods could have been used (such as variational inference, Laplace approximation, or SWAG). Based on the results of this experiment, we highlight the possibility of incorporating SWAG in the training of DGMEs as a better way to account for epistemic uncertainty (as opposed to dropout).

\section{Conclusions}
\label{conclusions}

This paper proposes DGMEs, a novel probabilistic DL ensemble method for jointly quantifying epistemic and aleatoric uncertainty. Unlike deep ensembling, DGMEs optimizes the data likelihood directly and is able to capture complex behavior in the predictive distribution (e.g., heavy-tailedness and multimodality) by modeling the conditional distribution of the data as a Gaussian mixture. Our experiments show that DGMEs can capture more complex distributional properties than a variety of probabilistic DL baselines in regression settings and obtain competitive performance on detecting OOD samples in classification settings. As next steps, alternative mechanisms for handling the epistemic uncertainty can be considered. For example, one can instead form a variational approximation to the posterior of each mixture component, thereby  forming a Gaussian mixture approximation to the posterior parameters of the ensemble. Additionally, a more thorough analysis on the classification setting can be considered. Rather than using a mixture of categorical distributions to model the predictive density, one can use a mixture of Dirichlet distributions to account for uncertainty in the class probabilities, similar in line to the work of \cite{hobbhahn2022fast}. Finally, DGMEs can be applied to improve the efficiency of active learning algorithms and exploration strategies in reinforcement learning.

\textbf{Acknowledgments.}
This paper was prepared for informational purposes by the Artificial Intelligence Research group of JPMorgan Chase \& Co. and its affiliates (``JP Morgan''), and is not a product of the Research Department of JP Morgan. JP Morgan makes no representation and warranty whatsoever and disclaims all liability, for the completeness, accuracy or reliability of the information contained herein. This document is not intended as investment research or investment advice, or a recommendation, offer or solicitation for the purchase or sale of any security, financial instrument, financial product or service, or to be used in any way for evaluating the merits of participating in any transaction, and shall not constitute a solicitation under any jurisdiction or to any person, if such solicitation under such jurisdiction or to such person would be unlawful.


\begin{thebibliography}{47}
\providecommand{\natexlab}[1]{#1}
\providecommand{\url}[1]{\texttt{#1}}
\expandafter\ifx\csname urlstyle\endcsname\relax
  \providecommand{\doi}[1]{doi: #1}\else
  \providecommand{\doi}{doi: \begingroup \urlstyle{rm}\Url}\fi

\bibitem[Abdar et~al.(2021)Abdar, Pourpanah, Hussain, Rezazadegan, Liu,
  Ghavamzadeh, Fieguth, Cao, Khosravi, Acharya, et~al.]{abdar2021review}
Moloud Abdar, Farhad Pourpanah, Sadiq Hussain, Dana Rezazadegan, Li~Liu,
  Mohammad Ghavamzadeh, Paul Fieguth, Xiaochun Cao, Abbas Khosravi, U~Rajendra
  Acharya, et~al.
\newblock A review of uncertainty quantification in deep learning: Techniques,
  applications and challenges.
\newblock \emph{Information Fusion}, 76:\penalty0 243--297, 2021.

\bibitem[Arora et~al.(2019)Arora, Du, Hu, Li, and Wang]{arora2019fine}
Sanjeev Arora, Simon Du, Wei Hu, Zhiyuan Li, and Ruosong Wang.
\newblock Fine-grained analysis of optimization and generalization for
  overparameterized two-layer neural networks.
\newblock In \emph{International Conference on Machine Learning}, pages
  322--332. PMLR, 2019.

\bibitem[Bacharoglou(2010)]{bacharoglou2010approximation}
Athanassia Bacharoglou.
\newblock Approximation of probability distributions by convex mixtures of
  gaussian measures.
\newblock \emph{Proceedings of the American Mathematical Society}, 138\penalty0
  (7):\penalty0 2619--2628, 2010.

\bibitem[Bazzani et~al.(2016)Bazzani, Larochelle, and
  Torresani]{bazzani2016recurrent}
Loris Bazzani, Hugo Larochelle, and Lorenzo Torresani.
\newblock Recurrent mixture density network for spatiotemporal visual
  attention.
\newblock \emph{arXiv preprint arXiv:1603.08199}, 2016.

\bibitem[Begoli et~al.(2019)Begoli, Bhattacharya, and Kusnezov]{begoli2019need}
Edmon Begoli, Tanmoy Bhattacharya, and Dimitri Kusnezov.
\newblock The need for uncertainty quantification in machine-assisted medical
  decision making.
\newblock \emph{Nature Machine Intelligence}, 1\penalty0 (1):\penalty0 20--23,
  2019.

\bibitem[Bishop(1994)]{bishop1994mixture}
Christopher~M Bishop.
\newblock Mixture density networks.
\newblock 1994.

\bibitem[Chandra and He(2021)]{chandra2021bayesian}
Rohitash Chandra and Yixuan He.
\newblock Bayesian neural networks for stock price forecasting before and
  during covid-19 pandemic.
\newblock \emph{Plos one}, 16\penalty0 (7):\penalty0 e0253217, 2021.

\bibitem[Charpentier et~al.(2022)Charpentier, Senanayake, Kochenderfer, and
  G{\"u}nnemann]{charpentier2022disentangling}
Bertrand Charpentier, Ransalu Senanayake, Mykel Kochenderfer, and Stephan
  G{\"u}nnemann.
\newblock Disentangling epistemic and aleatoric uncertainty in reinforcement
  learning.
\newblock \emph{arXiv preprint arXiv:2206.01558}, 2022.

\bibitem[Chen et~al.(2014)Chen, Fox, and Guestrin]{chen2014stochastic}
Tianqi Chen, Emily Fox, and Carlos Guestrin.
\newblock Stochastic gradient hamiltonian monte carlo.
\newblock In \emph{International conference on machine learning}, pages
  1683--1691. PMLR, 2014.

\bibitem[Clements et~al.(2019)Clements, Van~Delft, Robaglia, Slaoui, and
  Toth]{clements2019estimating}
William~R Clements, Bastien Van~Delft, Beno{\^\i}t-Marie Robaglia, Reda~Bahi
  Slaoui, and S{\'e}bastien Toth.
\newblock Estimating risk and uncertainty in deep reinforcement learning.
\newblock \emph{arXiv preprint arXiv:1905.09638}, 2019.

\bibitem[Davis et~al.(2020)Davis, Hollingsworth, Caudron, and
  Irvine]{davis2020use}
Christopher~N Davis, T~Deirdre Hollingsworth, Quentin Caudron, and Michael~A
  Irvine.
\newblock The use of mixture density networks in the emulation of complex
  epidemiological individual-based models.
\newblock \emph{PLoS computational biology}, 16\penalty0 (3):\penalty0
  e1006869, 2020.

\bibitem[Daxberger et~al.(2021)Daxberger, Kristiadi, Immer, Eschenhagen, Bauer,
  and Hennig]{daxberger2021laplace}
Erik Daxberger, Agustinus Kristiadi, Alexander Immer, Runa Eschenhagen,
  Matthias Bauer, and Philipp Hennig.
\newblock Laplace redux-effortless bayesian deep learning.
\newblock \emph{Advances in Neural Information Processing Systems},
  34:\penalty0 20089--20103, 2021.

\bibitem[Dua and Graff(2017)]{Dua2019UCI}
Dheeru Dua and Casey Graff.
\newblock {UCI} machine learning repository, 2017.
\newblock URL \url{http://archive.ics.uci.edu/ml}.

\bibitem[D’Isanto and Polsterer(2018)]{d2018photometric}
Antonio D’Isanto and Kai~Lars Polsterer.
\newblock Photometric redshift estimation via deep learning-generalized and
  pre-classification-less, image based, fully probabilistic redshifts.
\newblock \emph{Astronomy \& Astrophysics}, 609:\penalty0 A111, 2018.

\bibitem[Farrell et~al.(2021)Farrell, Liang, and Misra]{farrell2021deep}
Max~H Farrell, Tengyuan Liang, and Sanjog Misra.
\newblock Deep neural networks for estimation and inference.
\newblock \emph{Econometrica}, 89\penalty0 (1):\penalty0 181--213, 2021.

\bibitem[Gal and Ghahramani(2016)]{gal2016dropout}
Yarin Gal and Zoubin Ghahramani.
\newblock Dropout as a bayesian approximation: Representing model uncertainty
  in deep learning.
\newblock In \emph{international conference on machine learning}, pages
  1050--1059. PMLR, 2016.

\bibitem[Gawlikowski et~al.(2021)Gawlikowski, Tassi, Ali, Lee, Humt, Feng,
  Kruspe, Triebel, Jung, Roscher, et~al.]{gawlikowski2021survey}
Jakob Gawlikowski, Cedrique Rovile~Njieutcheu Tassi, Mohsin Ali, Jongseok Lee,
  Matthias Humt, Jianxiang Feng, Anna Kruspe, Rudolph Triebel, Peter Jung,
  Ribana Roscher, et~al.
\newblock A survey of uncertainty in deep neural networks.
\newblock \emph{arXiv preprint arXiv:2107.03342}, 2021.

\bibitem[Glorot and Bengio(2010)]{glorot2010understanding}
Xavier Glorot and Yoshua Bengio.
\newblock Understanding the difficulty of training deep feedforward neural
  networks.
\newblock In \emph{Proceedings of the thirteenth international conference on
  artificial intelligence and statistics}, pages 249--256. JMLR Workshop and
  Conference Proceedings, 2010.

\bibitem[Goodfellow et~al.(2016)Goodfellow, Bengio, and
  Courville]{goodfellow2016deep}
Ian Goodfellow, Yoshua Bengio, and Aaron Courville.
\newblock \emph{Deep learning}.
\newblock MIT press, 2016.

\bibitem[Goodfellow et~al.(2014)Goodfellow, Shlens, and
  Szegedy]{goodfellow2014explaining}
Ian~J Goodfellow, Jonathon Shlens, and Christian Szegedy.
\newblock Explaining and harnessing adversarial examples.
\newblock \emph{arXiv preprint arXiv:1412.6572}, 2014.

\bibitem[Graves(2011)]{graves2011practical}
Alex Graves.
\newblock Practical variational inference for neural networks.
\newblock \emph{Advances in neural information processing systems}, 24, 2011.

\bibitem[Greff et~al.(2017)Greff, van Steenkiste, and
  Schmidhuber]{NIPS2017_NEM}
Klaus Greff, Sjoerd van Steenkiste, and J\"{u}rgen Schmidhuber.
\newblock Neural expectation maximization.
\newblock In I.~Guyon, U.~Von Luxburg, S.~Bengio, H.~Wallach, R.~Fergus,
  S.~Vishwanathan, and R.~Garnett, editors, \emph{Advances in Neural
  Information Processing Systems}, volume~30. Curran Associates, Inc., 2017.

\bibitem[Hern{\'a}ndez-Lobato and Adams(2015)]{hernandez2015probabilistic}
Jos{\'e}~Miguel Hern{\'a}ndez-Lobato and Ryan Adams.
\newblock Probabilistic backpropagation for scalable learning of bayesian
  neural networks.
\newblock In \emph{International conference on machine learning}, pages
  1861--1869. PMLR, 2015.

\bibitem[Hobbhahn et~al.(2022)Hobbhahn, Kristiadi, and
  Hennig]{hobbhahn2022fast}
Marius Hobbhahn, Agustinus Kristiadi, and Philipp Hennig.
\newblock Fast predictive uncertainty for classification with bayesian deep
  networks.
\newblock In \emph{Uncertainty in Artificial Intelligence}, pages 822--832.
  PMLR, 2022.

\bibitem[Hochreiter and Schmidhuber(1997)]{LSTMhochreiter}
Sepp Hochreiter and Jürgen Schmidhuber.
\newblock Long short-term memory.
\newblock \emph{Neural computation}, 9:\penalty0 1735--80, 12 1997.

\bibitem[Hoffman et~al.(2013)Hoffman, Blei, Wang, and
  Paisley]{hoffman2013stochastic}
Matthew~D Hoffman, David~M Blei, Chong Wang, and John Paisley.
\newblock Stochastic variational inference.
\newblock \emph{Journal of Machine Learning Research}, 2013.

\bibitem[Hoffmann and Elster(2021)]{hoffmann2021deep}
Lara Hoffmann and Clemens Elster.
\newblock Deep ensembles from a bayesian perspective.
\newblock \emph{arXiv preprint arXiv:2105.13283}, 2021.

\bibitem[H{\"u}llermeier and Waegeman(2019)]{hullermeier2019aleatoric}
Eyke H{\"u}llermeier and Willem Waegeman.
\newblock Aleatoric and epistemic uncertainty in machine learning: A tutorial
  introduction.
\newblock 2019.

\bibitem[Kendall and Gal(2017)]{kendall2017uncertainties}
Alex Kendall and Yarin Gal.
\newblock What uncertainties do we need in bayesian deep learning for computer
  vision?
\newblock \emph{Advances in neural information processing systems}, 30, 2017.

\bibitem[Lakshminarayanan et~al.(2017)Lakshminarayanan, Pritzel, and
  Blundell]{lakshminarayanan2017de}
Balaji Lakshminarayanan, Alexander Pritzel, and Charles Blundell.
\newblock Simple and scalable predictive uncertainty estimation using deep
  ensembles.
\newblock \emph{Advances in neural information processing systems}, 30, 2017.

\bibitem[Michelmore et~al.(2018)Michelmore, Kwiatkowska, and
  Gal]{michelmore2018evaluating}
Rhiannon Michelmore, Marta Kwiatkowska, and Yarin Gal.
\newblock Evaluating uncertainty quantification in end-to-end autonomous
  driving control.
\newblock \emph{arXiv preprint arXiv:1811.06817}, 2018.

\bibitem[Minka(1998)]{minka1998expectation}
Thomas Minka.
\newblock Expectation-maximization as lower bound maximization.
\newblock \emph{Tutorial published on the web at http://www-white. media. mit.
  edu/tpminka/papers/em. html}, 7\penalty0 (2), 1998.

\bibitem[Neal(2012)]{neal2012bayesian}
Radford~M Neal.
\newblock \emph{Bayesian learning for neural networks}, volume 118.
\newblock Springer Science \& Business Media, 2012.

\bibitem[Ovadia et~al.(2019)Ovadia, Fertig, Ren, Nado, Sculley, Nowozin,
  Dillon, Lakshminarayanan, and Snoek]{ovadia2019can}
Yaniv Ovadia, Emily Fertig, Jie Ren, Zachary Nado, David Sculley, Sebastian
  Nowozin, Joshua Dillon, Balaji Lakshminarayanan, and Jasper Snoek.
\newblock Can you trust your model's uncertainty? evaluating predictive
  uncertainty under dataset shift.
\newblock \emph{Advances in neural information processing systems}, 32, 2019.

\bibitem[Pearce et~al.(2020)Pearce, Leibfried, and
  Brintrup]{pearce2020uncertainty}
Tim Pearce, Felix Leibfried, and Alexandra Brintrup.
\newblock Uncertainty in neural networks: Approximately bayesian ensembling.
\newblock In \emph{International conference on artificial intelligence and
  statistics}, pages 234--244. PMLR, 2020.

\bibitem[Sarawgi et~al.(2020)Sarawgi, Zulfikar, Khincha, and
  Maes]{sarawgi2020have}
Utkarsh Sarawgi, Wazeer Zulfikar, Rishab Khincha, and Pattie Maes.
\newblock Why have a unified predictive uncertainty? disentangling it using
  deep split ensembles.
\newblock \emph{arXiv preprint arXiv:2009.12406}, 2020.

\bibitem[Teye et~al.(2018)Teye, Azizpour, and Smith]{teye2018bayesian}
Mattias Teye, Hossein Azizpour, and Kevin Smith.
\newblock Bayesian uncertainty estimation for batch normalized deep networks.
\newblock In \emph{International Conference on Machine Learning}, pages
  4907--4916. PMLR, 2018.

\bibitem[Tsanas and Xifara(2012)]{tsanas2012accurate}
Athanasios Tsanas and Angeliki Xifara.
\newblock Accurate quantitative estimation of energy performance of residential
  buildings using statistical machine learning tools.
\newblock \emph{Energy and buildings}, 49:\penalty0 560--567, 2012.

\bibitem[T{\"u}fekci(2014)]{tufekci2014prediction}
P{\i}nar T{\"u}fekci.
\newblock Prediction of full load electrical power output of a base load
  operated combined cycle power plant using machine learning methods.
\newblock \emph{International Journal of Electrical Power \& Energy Systems},
  60:\penalty0 126--140, 2014.

\bibitem[Unni et~al.(2020)Unni, Yao, and Zheng]{unni2020deep}
Rohit Unni, Kan Yao, and Yuebing Zheng.
\newblock Deep convolutional mixture density network for inverse design of
  layered photonic structures.
\newblock \emph{ACS Photonics}, 7\penalty0 (10):\penalty0 2703--2712, 2020.

\bibitem[Valdenegro-Toro and Mori(2022)]{valdenegro2022deeper}
Matias Valdenegro-Toro and Daniel~Saromo Mori.
\newblock A deeper look into aleatoric and epistemic uncertainty
  disentanglement.
\newblock In \emph{2022 IEEE/CVF Conference on Computer Vision and Pattern
  Recognition Workshops (CVPRW)}, pages 1508--1516. IEEE, 2022.

\bibitem[Wilson and Izmailov(2020{\natexlab{a}})]{NEURIPS2020MultiSWAG}
Andrew~G Wilson and Pavel Izmailov.
\newblock Bayesian deep learning and a probabilistic perspective of
  generalization.
\newblock In H.~Larochelle, M.~Ranzato, R.~Hadsell, M.F. Balcan, and H.~Lin,
  editors, \emph{Advances in Neural Information Processing Systems}, volume~33,
  pages 4697--4708. Curran Associates, Inc., 2020{\natexlab{a}}.

\bibitem[Wilson and Izmailov(2020{\natexlab{b}})]{wilson2020bayesian}
Andrew~G Wilson and Pavel Izmailov.
\newblock Bayesian deep learning and a probabilistic perspective of
  generalization.
\newblock \emph{Advances in neural information processing systems},
  33:\penalty0 4697--4708, 2020{\natexlab{b}}.

\bibitem[Wu(1983)]{wu1983convergence}
CF~Jeff Wu.
\newblock On the convergence properties of the em algorithm.
\newblock \emph{The Annals of statistics}, pages 95--103, 1983.

\bibitem[Yeh(1998)]{yeh1998modeling}
I-C Yeh.
\newblock Modeling of strength of high-performance concrete using artificial
  neural networks.
\newblock \emph{Cement and Concrete research}, 28\penalty0 (12):\penalty0
  1797--1808, 1998.

\bibitem[Zen and Senior(2014)]{zen2014deep}
Heiga Zen and Andrew Senior.
\newblock Deep mixture density networks for acoustic modeling in statistical
  parametric speech synthesis.
\newblock In \emph{2014 IEEE international conference on acoustics, speech and
  signal processing (ICASSP)}, pages 3844--3848. IEEE, 2014.

\bibitem[Zhang et~al.(2020)Zhang, Liu, Yan, Han, Li, and
  Long]{zhang2020improved}
Hao Zhang, Yongqian Liu, Jie Yan, Shuang Han, Li~Li, and Quan Long.
\newblock Improved deep mixture density network for regional wind power
  probabilistic forecasting.
\newblock \emph{IEEE Transactions on Power Systems}, 35\penalty0 (4):\penalty0
  2549--2560, 2020.

\end{thebibliography}

\newpage

\onecolumn

\hrule height4pt
\vskip .15in

\begin{center}
{\LARGE \textbf{Deep Gaussian Mixture Ensembles}} \\[10pt]
    {\Large \textbf{SUPPLEMENTARY MATERIALS}}
\end{center}

\vskip .05in
\hrule height1pt
\vskip .25in

\appendix

\section{Theoretical Proofs}\label{sec:app-proofs}

This section includes the proofs of the propositions presented in Section~\ref{sec:epistemic-uncertainty}. We have also included the proposition statements for readability purposes.

\begin{proposition} \label{prop:max-lower-bound-app}
Under the assumption that $\pi_i = 1/K-1$ for $i=1,..,K$,
maximizing the Gaussian mixture data likelihood directly achieves better or equal joint likelihood than maximizing each ensemble member's likelihood separately.
\end{proposition}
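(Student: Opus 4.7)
The plan is to make rigorous the intuition that deep ensembles optimize only a lower bound on the true Gaussian mixture log-likelihood, while DGMEs optimize it directly. I read the statement as saying uniform mixture weights, i.e.\ $\pi_k = 1/K$ for all $k$, which is consistent with how DE combines its members. I would organize the argument in three short steps.

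First, I would write out the full mixture data log-likelihood as
\begin{equation*}
\mathcal{L}(\theta) \;=\; \sum_{n=1}^N \log\!\left( \tfrac{1}{K}\sum_{k=1}^K p_k(y_n\mid x_n,\theta_k)\right),
\end{equation*}
and then apply Jensen's inequality to each term using the concavity of $\log$ together with the uniform probability vector $(1/K,\ldots,1/K)$, giving
\begin{equation*}
\mathcal{L}(\theta) \;\geq\; \frac{1}{K}\sum_{k=1}^K \sum_{n=1}^N \log p_k(y_n\mid x_n,\theta_k) \;=\; \frac{1}{K}\sum_{k=1}^K \mathcal{L}_k(\theta_k),
\end{equation*}
where $\mathcal{L}_k(\theta_k) := \sum_{n} \log p_k(y_n\mid x_n,\theta_k)$ is exactly the objective optimized for the $k$-th ensemble member in DE.

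Second, I would note that DE, by training each member independently, produces parameters $\theta^{\star}_{\mathrm{DE}}$ that maximize the separable surrogate $\sum_k \mathcal{L}_k(\theta_k)$, i.e.\ the right-hand side of the Jensen bound. DGMEs, in contrast, maximize $\mathcal{L}(\theta)$ itself, yielding $\theta^{\star}_{\mathrm{DGME}}$ with $\mathcal{L}(\theta^{\star}_{\mathrm{DGME}}) \geq \mathcal{L}(\theta)$ for every $\theta$; in particular,
\begin{equation*}
\mathcal{L}(\theta^{\star}_{\mathrm{DGME}}) \;\geq\; \mathcal{L}(\theta^{\star}_{\mathrm{DE}}),
\end{equation*}
which is the desired inequality. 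Equality holds exactly when the Jensen bound is tight at $\theta^{\star}_{\mathrm{DE}}$, which happens only in the degenerate case where the component densities $p_k(y_n\mid x_n,\theta^{\star}_{\mathrm{DE},k})$ coincide for all $k$ at every training point.

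The argument itself is short and the core step (Jensen on a $\log$-sum) is routine; the main thing to handle carefully is framing what is being compared. A priori the claim could look vacuous because a global maximizer of $\mathcal{L}$ dominates any other value trivially; what makes the statement content-ful is the identification of the DE objective as a Jensen lower bound of $\mathcal{L}$ evaluated at the same parameter set $\theta=\{\theta_k\}_{k=1}^K$ under the uniform-weight assumption. I would therefore be explicit about this identification, and also briefly point out how the assumption $\pi_k = 1/K$ is essential so that the Jensen weights align with the DE ``uniform average'' combination rule; without it, the DE training objective would not correspond to a lower bound on $\mathcal{L}$ and the comparison would not be apples-to-apples.
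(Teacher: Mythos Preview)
Your proposal is correct and follows essentially the same route as the paper: apply Jensen's inequality to the $\log$ of the uniform mixture to obtain the separable lower bound $\tfrac{1}{K}\sum_k \mathcal{L}_k(\theta_k)$, identify this with the DE objective, and conclude that the maximizer of the full mixture likelihood dominates. The paper frames the argument with an expectation $\mathbb{E}_{X,Y}[\cdot]$ rather than your finite sum over $n$, but the content and the key step are identical; your version is in fact a bit more carefully worded (the paper writes the inequality between $\argmax$'s rather than between maximal values).
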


\begin{proof}
The EM algorithm minimizes the joint data log-likelihood as defined in equation~\eqref{eq: exact_data_likelihood_max}, which can be lower-bounded in the following way by using Jensen's inequality:

\begin{align*}
    \argmax_{\theta} \mathbb{E}_{X,Y} \left[ \log\left(\sum_{k=1}^K \pi_k p_k(y|x, \theta_k) \right) \right] &\geq \argmax_{\theta} \mathbb{E}_{X,Y} \left[ \sum_{k=1}^K \log\left(\pi_k \right) + \log \left( p_k(y|x, \theta_k) \right) \right] = \\
    &= \argmax_{\theta} \sum_{k=1}^K  \mathbb{E}_{X,Y} \left[\log(\pi_k) \right] + \mathbb{E}_{X,Y} \left[ \ell_{\theta_k}(x, y))\right]. \\
\end{align*}

By assumption, the first term constant (of value $-\log(K)$), hence:

\begin{align*}
    &\argmax_{\theta} \mathbb{E}_{X,Y} \left[ \log\left(\sum_{k=1}^K \pi_k p_k(y|x, \theta_k) \right) \right] \geq \argmax_{\theta} \sum_{k=1}^K \mathbb{E}_{X,Y} \left[ \ell_{\theta_k}(x, y))\right],
\end{align*}

with the lower bound corresponding to maximizing the likelihood of each ensemble member separately, as performed in DEs~\citep{lakshminarayanan2017de}.

\end{proof}

\begin{proposition} \label{prop:em-convergence-suppmat}
Under assumptions \ref{assumption-min-likelihood}, \ref{assumption-true-function-mean}, \ref{assumption-true-function-variance} and \ref{assumption-non-degenerate-weights}, let the mean and variance in each ensemble model being estimated via a separate 2-layer deep ReLu network from a common feature extraction layer. Then the DGMEs EM algorithm convergences to a non stationary point that maximizes the data likelihood with high-probability.
\end{proposition}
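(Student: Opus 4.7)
The plan is to reduce the claim to the classical monotone-convergence theorem of \citet{wu1983convergence}, which states that if the EM surrogate $Q(\theta, \theta^{(j)})$ is non-decreasing along the iterates and strictly increasing whenever $\theta^{(j)}$ is not a stationary point of $Q(\cdot, \theta^{(j)})$, then the EM iterates converge to a stationary point of the observed-data log-likelihood, which under standard regularity conditions is a local maximizer. It therefore suffices to establish, with high probability, a uniform lower bound
\[ Q(\theta^{(j+1)}, \theta^{(j)}) - Q(\theta^{(j)}, \theta^{(j)}) \;\geq\; \epsilon \;>\; 0 \]
at every non-stationary iterate.

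First I would exploit the separability of $Q$. Because the responsibilities $\gamma_{k,n}^{(j)}$ are frozen during the M-step and the mixture-weight optimizer is analytical and always improves the $\pi$-portion of $Q$, the remaining work decomposes across components: it is enough to show that for each $k$ the weighted log-likelihood
\[ L_k^{(j)}(\theta_k) \;=\; \sum_{n=1}^N \gamma_{k,n}^{(j)}\,\ell_{\theta_k}(x_n, y_n) \]
is strictly increased by the Adam pass of the M-step. Assumption~\ref{assumption-min-likelihood} already asserts that the gap between $L_k^{(j)}(\theta_k^{(j)})$ and the per-round optimum $L_k^{(j)}(\theta_k^{*})$ is at least $\epsilon_{t,k}/K$; what remains is to show that the stochastic optimizer closes most of this gap in $E$ epochs with high probability.

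The two cited neural-network results would enter here, applied separately to the mean and variance heads that branch off the common feature extractor. For the mean head, Assumption~\ref{assumption-true-function-mean} places $\mu(x)$ in a Sobolev ball, so I would invoke \citet{farrell2021deep} on a two-layer ReLU network minimizing the responsibility-weighted squared error to obtain a high-probability excess-risk bound. For the variance head, the Gram-matrix condition of Assumption~\ref{assumption-true-function-variance} is precisely the NTK hypothesis of \citet{arora2019fine}, whose analysis then delivers an analogous gradient-descent convergence rate for $\sigma^2_{\theta_k}(\cdot)$. Assumption~\ref{assumption-non-degenerate-weights} guarantees that the $\gamma$-reweighted empirical losses remain within a bounded factor of the unweighted losses, so the cited rates transfer unchanged. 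Summing the per-head improvements over $k$ and combining with the closed-form $\pi$-update gives a uniform $\epsilon>0$ at each non-stationary iterate, and \citet{wu1983convergence} then delivers the stated convergence.

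The principal obstacle will be reconciling the asymptotic flavor of the two neural-network results, both of which assume sufficient width and a sufficiently large number of gradient steps and are originally stated without a shared feature extractor, with the discrete finite-epoch M-step of Algorithm~\ref{alg: dgme}. Concretely, a union bound over the $J$ EM rounds is required so that the per-round high-probability events hold simultaneously, and the shared feature extractor must be handled by conditioning on its output so that the two head networks can be treated as independent two-layer ReLU regressors. A secondary subtlety is the heteroscedastic coupling of $\mu_{\theta_k}$ and $\sigma^2_{\theta_k}$ inside $\ell_{\theta_k}$: the two heads cannot be rigorously decoupled, so the single-head bounds must be lifted to the joint objective, for example through an alternating-minimization argument together with a Lipschitz estimate on $\ell_{\theta_k}$ viewed as a function of $(\mu, \sigma^2)$.
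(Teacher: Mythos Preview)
Your proposal is correct and follows essentially the same route as the paper: invoke \citet{wu1983convergence} to reduce to showing $Q(\theta^{(t+1)};\theta^{(t)})-Q(\theta^{(t)};\theta^{(t)})>0$, decompose across mixture components, use Assumption~\ref{assumption-min-likelihood} for the per-component gap, and then apply \citet{farrell2021deep} to the mean head and \citet{arora2019fine} to the variance head, with Assumption~\ref{assumption-non-degenerate-weights} ensuring the responsibility-weighted losses remain comparable. The paper handles the mean/variance coupling by a simple triangle-inequality split $\|\ell(\mu^*,\sigma^*)-\ell(\widehat\mu,\widehat\sigma)\|_2\leq\|\ell(\mu^*,\widehat\sigma)-\ell(\widehat\mu,\widehat\sigma)\|_2+\|\ell(\mu^*,\sigma^*)-\ell(\mu^*,\widehat\sigma)\|_2$ rather than the alternating-minimization/Lipschitz argument you sketch, and does not make the union bound over EM rounds explicit, but otherwise the structure is identical.
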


\begin{proof}
Using \citet[Theorem 4.1]{wu1983convergence}, to guarantee convergence of the EM algorithm it is enough to prove that at every round $t$:

\begin{equation} \label{app:diff-q-function}
\forall \theta \notin \mathcal{N}: Q(\theta^{(t+1)}; \theta^{(t)}) - Q(\theta^{(t)}; \theta^{(t)}) > 0,
\end{equation}

where $\mathcal{N}$ is the set of stationary points of the function $Q$. By writing the difference in equation~\eqref{app:diff-q-function} above we have that:

\begin{align*}
Q(\theta^{(t+1)}; \theta^{(t)}) - Q(\theta^{(t)}; \theta^{(t)}) &= \sum_{n=1}^N\sum_{k=1}^K \gamma_{k, n}(\log(\pi_k) + \ell_{\theta^{(t+1)}}(x_n, y_n)) - \sum_{n=1}^N\sum_{k=1}^K \gamma_{k, n}(\log(\pi_k) + \ell_{\theta^{(t)}}(x_n, y_n)) \\
&= \sum_{k=1}^K \left[ \sum_{n=1}^N \gamma_{k,n} \left( \ell_{\theta^{(t+1)}}(x_n, y_n)\right)  - \sum_{n=1}^N \gamma_{k,n} \left(\ell_{\theta^{(t)}}(x_n, y_n) \right) \right].
\end{align*}

By setting $\theta^{(t+1)} = \theta^{*}_k$ and using Assumption~\ref{assumption-min-likelihood}:

\begin{align*}
Q(\theta^{*}_k; \theta^{(t)}) - Q(\theta^{(t)}; \theta^{(t)}) \geq \sum_{k=1}^K \frac{\epsilon_{t,k}}{K} > \epsilon
\end{align*}

The result follows if every ensemble network can learn the maximum likelihood $\theta^*$ at every round. We will show that the above happens in high probability. Without loss of generality, set the round $t$ and the ensemble member $k$ if the mean and variance functions follow assumptions \ref{assumption-true-function-mean} and \ref{assumption-true-function-variance}. Let $\ell^* =\ell_{\theta^{*}_k}$ and $\hat{\ell}_\theta$ be the estimated likelihood. As the likelihood is Gaussian, the estimation problem is equivalent to estimating the true mean function $\mu^*(x)$ and variance function $\sigma^*(x)$. Assume the mean and variance functions are learnt independently by using a pre-trained feature extraction layer, we can break down the estimation problem into:

\begin{align*}
    \| \ell(\mu^*, \sigma^*) - \ell(\widehat{\mu}, \widehat{\sigma})\|_2 &= \| \ell(\mu^*, \sigma^*) \pm \ell(\mu^*, \hat{\sigma}) - \ell(\widehat{\mu}, \widehat{\sigma})\|_2 \\
    &\leq \underbrace{\| \ell(\mu^*, \widehat{\sigma}) - \ell(\widehat{\mu}, \widehat{\sigma})\|_2}_{(A)} + \underbrace{\| \ell(\mu^*, \sigma^*) - \ell(\mu^*, \widehat{\sigma})\|_2}_{(B)}.
\end{align*}

Provided $n > \mathcal{O}(\log(1/\delta)/ \epsilon^2)$ and using Assumption~\ref{assumption-non-degenerate-weights} to guarantee non-degenerate weights, the proposition follows since:

\begin{itemize}
    \item[(A)] For the mean function estimation, the likelihood reduces to a weighted least square loss, which satisfies the assumptions in \citet[2.1]{farrell2021deep}. Hence, one would need at least $n > \mathcal{O}(\log(1/\delta)/ \epsilon)$ samples to estimate the mean function within $\epsilon/2$ radius and with probability $1 - \delta$;
    \item[(B)] For the variance function estimation, the assumption correspond to the requirement in \citet[Section 5]{arora2019fine}; hence, one would need at least $n > \mathcal{O}(\log(1/\delta)/ \epsilon^2)$ samples to estimate the mean function within an $\epsilon/2$ radius and with probability $1 - \delta$.
\end{itemize}

\end{proof}

\begin{proposition}
If the weights of each ensemble members are initialized to 0 with fixed bias terms, a single EM step for DGMEs is equivalent to perform DEs.
\end{proposition}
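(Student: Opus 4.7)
The plan is to unroll a single iteration of Algorithm~\ref{alg: dgme} under the stated initialization and check that the resulting M-step coincides with the standard deep ensembles training objective. The key observation is that with weights identically zero and biases shared across ensemble members, every forward pass at round $j=0$ returns the same constants independently of both $k$ and $x$, which collapses the E-step responsibilities to the uniform value $1/K$.

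First I would verify the E-step computation. Since every weight matrix is zero, $\mu_{\theta_k^{(0)}}(x)$ and $\sigma^2_{\theta_k^{(0)}}(x)$ depend only on the output biases, yielding constants $\mu_0$ and $\sigma^2_0$ that do not depend on $k$ or $x$. Combined with $\pi_k^{(0)} = 1/K$ prescribed by Algorithm~\ref{alg: dgme}, equation~\eqref{eq: posterior_updates} then gives
\begin{equation*}
\gamma_{k,n}^{(1)} \;=\; \frac{(1/K)\,\mathcal{N}(y_n;\mu_0,\sigma^2_0)}{\sum_{j=1}^K (1/K)\,\mathcal{N}(y_n;\mu_0,\sigma^2_0)} \;=\; \frac{1}{K}
\end{equation*}
for every $k$ and $n$, so mixture membership is uniform across components in the very first E-step.

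Next I would substitute these responsibilities into the M-step. The closed-form mixture-weight update gives $\pi_k^{(1)} = \frac{1}{N}\sum_n \gamma_{k,n}^{(1)} = 1/K$, matching the uniform aggregation used by DEs. Pulling the constant $1/K$ out of the argmin in \eqref{eq: weighted_nll}, the parameter update for each $k$ reduces to
\begin{equation*}
\theta_k^{(1)} \;=\; \argmin_{\theta_k \in \Theta_k}\; \sum_{n=1}^N \left( \log \sigma^2_{\theta_k}(x_n) + \frac{(y_n - \mu_{\theta_k}(x_n))^2}{\sigma^2_{\theta_k}(x_n)} \right),
\end{equation*}
which is exactly the heteroscedastic Gaussian negative log-likelihood that each member of a DE minimizes, with the $K$ problems fully decoupled across $k$.

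The main subtlety, rather than an obstacle, is that all $K$ optimization problems above share an identical formal objective, so any diversity across ensemble members must come from their independent stochastic optimization runs (distinct random seeds, mini-batch orderings, and dropout masks when applicable)\,---\,which is precisely the source of diversity exploited by a standard DE. Stitching together the two observations yields the claim: a single EM iteration of DGMEs from the stated initialization is indistinguishable from training a DE.
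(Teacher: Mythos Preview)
Your proposal is correct and follows essentially the same route as the paper: zero weights with fixed biases force identical, input-independent outputs across all $K$ networks, collapsing the first E-step responsibilities to $1/K$, after which the M-step decouples into $K$ unweighted Gaussian NLL minimizations. Your write-up is in fact more careful than the paper's own proof, explicitly verifying the mixture-weight update and flagging that diversity must arise from the stochastic optimization runs.
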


\begin{proof}
    If any ensemble members $f_k$ has all weights initialized to 0, then it follows that $p_{k}(y_n|x_n, \theta_k) = a$ for some constant $\delta \in \mathbb{R}$. In addition, $\mu_{\theta_k}(x_n) = \mu$ and $\sigma_{\theta_k}^2 (x_n)$ for any $x_n$. Hence, in the expectation steps all posterior probabilities are equal to:

    \begin{align*}
        \gamma_{k, n} &= \frac{p_{k}(y_n|x_n, \theta_k)P_{\theta}(z_n=k)}{\sum_{j=1}^K p_{j}(y_n|x_n, \theta_j)P_{\theta}(z_n=j)} = \frac{\delta{\cal N}(y_n; \mu, \sigma^2)}{\sum_{j=1}^K \delta{\cal N}(y_n; \mu, \sigma^2)} = \frac{1}{K}.
    \end{align*}

    Hence, the maximization in the M-step is equal to:

    \begin{align*}
        \theta_k^\star &= \argmax_{\theta_k\in\Theta_k} \sum_{n=1}^N \gamma_{k, n}\ell_{\theta_k}(x_n, y_n) = \argmax_{\theta_k\in\Theta_k} \sum_{n=1}^N \ell_{\theta_k}(x_n, y_n),
    \end{align*}

    which corresponds to maximizing the likelihood of each ensemble member separately, as performed in DE \cite{lakshminarayanan2017de}.
    
\end{proof}

\section{Additional Experimental Results and Ablation Studies}
\label{sec:exp-supp}

\subsection{Toy Regression}
\label{app: toy_regression}

In this subsection, we provide additional experimental results and ablation studies on the toy regression dataset that provide valuable insights on the role of each of the DGME hyperparameters.

\subsubsection{Ablation: Number of EM Rounds}

To study the effect that the number of EM rounds has on training of DGMEs, Figure \ref{fig: training_improves} shows DGMEs trained with 1, 2 and 5 rounds on the toy regression task with Gaussian noise (Case 1), where the number of epochs per round is fixed to $E=80$. We can see in this figure that after $J=5$ EM rounds, the algorithm has converged to a conditional distribution that represents the ground truth quite well. 

\begin{figure}[t]
    \centering
    \includegraphics[width=0.9\linewidth]{./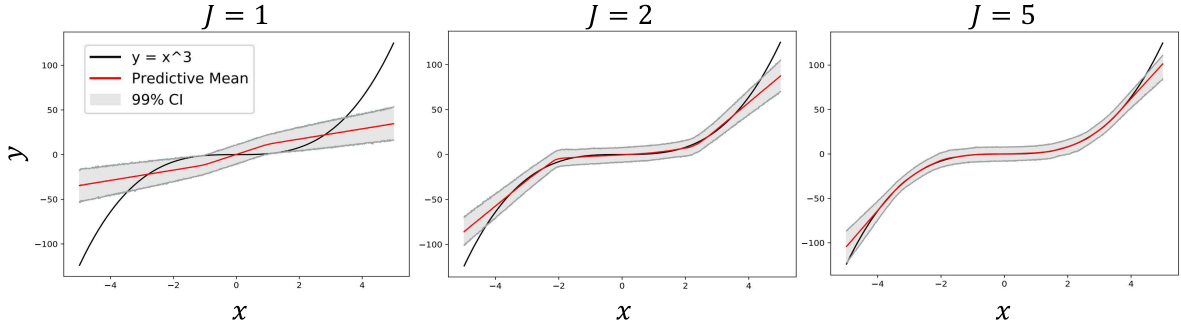}
    \caption{Results on a toy regression task with Gaussian noise for different numbers of EM rounds, as described in Section \ref{experiments:toy}. As $J$ increases, the predictive mean improves.}
    \label{fig: training_improves}
\end{figure}

Additionally, we can assess the joint impact of the number of epochs $E$ used in the M-Step per EM round and the total number of EM rounds $J$, while keeping the total computational budget constant (e.g., $E\times J =50$ total epochs). We test the following values of $E\in\{1, 2, 5, 10, 25, 50\}$ and report the average NLL over the training set and its corresponding standard error (computed over a total of 10 runs) in Table \ref{tab: em_rounds_ablation}. 

\begin{table}[t]
\centering
\resizebox{\textwidth}{!}{\begin{tabular}{@{}ccccccc@{}}
\toprule
                        & $(E=1, J = 50)$ & $(E=2, J = 25)$ & $(E=5, J = 10)$ & $(E=10, J = 5)$ & $(E=25, J = 2)$ & $(E=50, J = 1)$ \\ \midrule
Normal - Unimodal       & $2.71 \pm 0.06$ & $2.63 \pm 0.06$ & $2.58 \pm 0.03$ & $2.54 \pm 0.01$ & $2.54 \pm 0.01$ & $2.56 \pm 0.03$ \\
Heavy-Tailed - Unimodal & $2.98 \pm 0.03$ & $2.95 \pm 0.02$ & $2.88 \pm 0.02$ & $2.87 \pm 0.01$ & $2.91 \pm 0.01$ & $2.96 \pm 0.02$ \\
Normal - Bimodal        & $3.15 \pm 0.05$ & $3.09 \pm 0.07$ & $3.02 \pm 0.04$ & $3.13 \pm 0.08$ & $3.42 \pm 0.06$ & $3.53 \pm 0.04$ \\ \bottomrule
\end{tabular}}
\caption{Training NLL obtained for the toy regression dataset using DGMEs under different configurations of the number of epochs per EM round $E$ and the total number of EM rounds $J$ for a fixed computational budget $E\times J = 50$. }
\label{tab: em_rounds_ablation}
\end{table}

We can see empirically that for a fixed computational budget, there is tradeoff between the performance and the effective number of EM rounds. The tradeoff is more apparent when considering the more difficult examples (i.e., heavy-tailed unimodal noise and normal bimodal noise). If the number of epochs in the M-step is $E=1$ and we train for $J=50$ rounds, not enough information is being propagated between the E- and M-Step in each round of training, making learning inefficient. In the other extreme, if the number of epochs per M-step is $E=50$ and we train for $J=1$ rounds, even if the optimization problem in the M-step is more accurately resolved, we do not run enough EM rounds to accurately learn the underlying conditional density function. If we balance the number of epochs per rounds and the total number of EM rounds (i.e., $(E=5, J=10)$ or $(E=10, J=5)$), we get much better performance in terms of training NLL.

\subsubsection{Ablation: Dropout and Adversarial Training}
In this ablation, our goal is to understand the effect of epistemic uncertainty estimation techniques in DGMEs. As a rough analysis, Figure \ref{fig: dropout_ablation} shows the effect of training with dropout, adversarial training and their combination. Here, the dropout probability is set to $p_d=0.05$. We can see that without dropout or adversarial training, the uncertainty estimates are well-calibrated for the training data (features taking value between -4 and 4), but are underestimated for the test data (features taking absolute value between 4 and 5). By incorporating dropout and adversarial training, we can see that the uncertainty estimates become larger for the test examples.

\begin{figure}[t]
    \centering
    \includegraphics[width=\linewidth]{./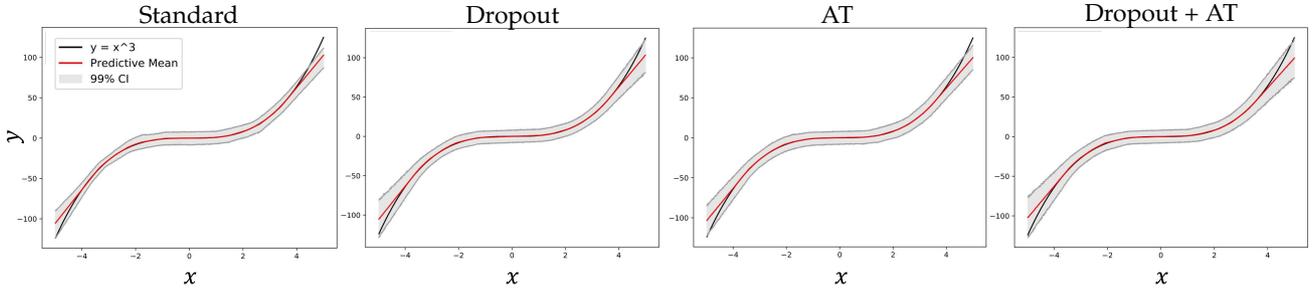}
    \caption{Results on a toy regression task with Gaussian noise. Left most plot corresponds to standard set up of DGMEs trained with $K=5$ networks. Second plot corresponds to incorporating Dropout in the training. Third plot shows the effect of using adversarial training, and final plot shows the effect of using both dropout and adversarial training.}
    \label{fig: dropout_ablation}
\end{figure}

To get a better understanding of the effect of dropout probability $p_d$ on the quantified uncertainty, we can evaluate the train and test NLL for different values of $p_d$ for each of the toy datasets. Results are shown in Table \ref{tab: dropout_ablation}. From this table, we observe that dropout creates a trade-off between performance on in-sample data and out-of-sample data in terms of NLL. Increasing the dropout probability in this case causes the average NLL to be worse for the training set, but improves it (up to a certain point) on the test set. In practice, we can choose the dropout probability to minimize the NLL on a validation set.

\begin{table}[t]
\resizebox{\textwidth}{!}{\begin{tabular}{@{}ccccccccccc@{}}
\toprule
                        & \multicolumn{2}{c}{$p_d = 0.0$}    & \multicolumn{2}{c}{$p_d = 0.05$}   & \multicolumn{2}{c}{$p_d = 0.1$}    & \multicolumn{2}{c}{$p_d = 0.15$}   & \multicolumn{2}{c}{$p_d = 0.2$}    \\ \midrule
                        & Train NLL       & Test NLL         & Train NLL       & Test NLL         & Train NLL       & Test NLL         & Train NLL       & Test NLL         & Train NLL       & Test NLL         \\ \midrule
Normal - Unimodal       & $2.55 \pm 0.01$ & $7.50 \pm 0.88 $ & $2.59 \pm 0.01$ & $4.86 \pm 0.28 $ & $2.63 \pm 0.01$ & $4.30 \pm 0.12 $ & $2.66 \pm 0.01$ & $4.17 \pm 0.08 $ & $2.70 \pm 0.02$ & $4.10 \pm 0.06 $ \\
Heavy-Tailed - Unimodal & $2.87 \pm 0.01$ & $6.31 \pm 0.49 $ & $2.90 \pm 0.01$ & $4.83 \pm 0.17 $ & $2.93 \pm 0.01$ & $4.44 \pm 0.15 $ & $2.97 \pm 0.02$ & $4.23 \pm 0.08 $ & $2.99 \pm 0.02$ & $4.19 \pm 0.06 $ \\
Normal - Bimodal        & $3.16 \pm 0.09$ & $6.81 \pm 1.08 $ & $3.18 \pm 0.08$ & $5.80 \pm 0.37 $ & $3.29 \pm 0.06$ & $5.44 \pm 0.24 $ & $3.31 \pm 0.07$ & $5.32 \pm 0.20 $ & $3.36 \pm 0.05$ & $5.34 \pm 0.09 $ \\ \bottomrule
\end{tabular}}
\caption{Train and test NLL of DGMEs for each toy regression dataset under different dropout probability values.}
\label{tab: dropout_ablation}
\end{table}

\subsubsection{Ablation: Number of Mixture Components}\label{sec:supp-mat-ablation-mixturen}
The number of components in the assumed Gaussian mixture impacts how well the model can estimate more complex noise distributions (e.g., heavy-tailed or bimodal distributions). Gaussian mixtures (with infinite components) are universal approximators to smooth continuous density functions, so the more components assumed, the more flexible the model is. When choosing the number of mixture components, one should take into consideration the complexity of the data generating process and the amount of data in the training set. If the data generating process is known to be Gaussian, then choosing a large number of components is not beneficial. On the other hand, if the data generating process is thought to be multimodal, then using more components is the better choice. We can see this in the following two ablation studies.

Figure \ref{fig: components_ablation_heavy_tailed} shows the effect of the number of mixtures components $K$ on the kurtosis of the learned predictive distribution. We observe that with more mixture components, the the model learns a fatter-tailed distribution. This makes sense since a Student-t distribution can be viewed as a Gaussian mixture with an infinite number of components with different variances. 

\begin{figure}[h]
    \centering
    \includegraphics[width=\linewidth]{./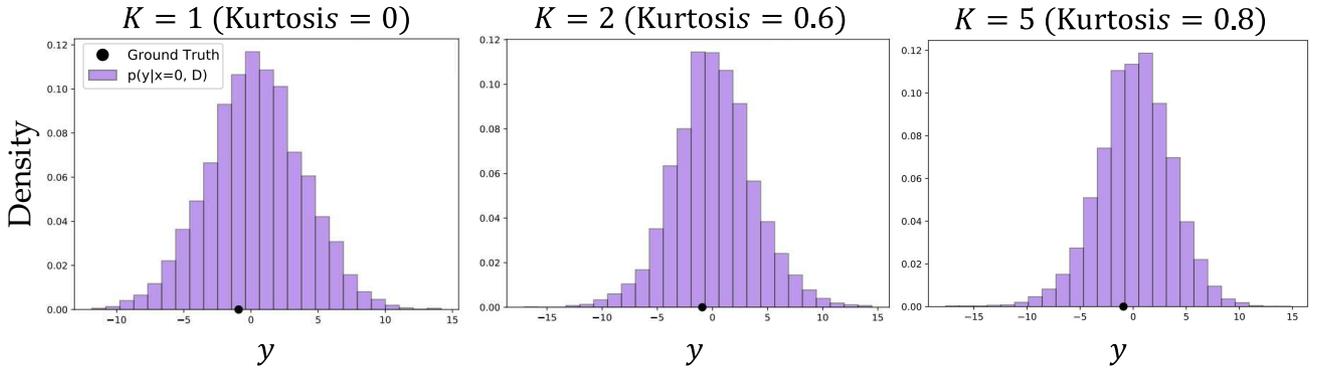}
    \caption{Effect of the number of mixtures on the learned kurtosis of the predictive distribution under heavy-tailed noise. }
    \label{fig: components_ablation_heavy_tailed}
\end{figure}

Figure \ref{fig: components_ablation_bimodal} shows the effect of the number of mixture components on the learned predictive distribution in the case of the bimodal Gaussian. We can see that when DGMEs assumes only $K=1$ mixture component, DGMEs have a similar predictive distribution as DEs, since the model will attempt to explain the bimodal data with a single Gaussian by overestimating the aleatoric noise. An interesting insight is that when DGMEs assume too many components (i.e., $K>2$), the model is still able to accurately learn that the underlying predictive distribution is still bimodal. 

\begin{figure}[t]
    \centering
    \includegraphics[width=0.9\linewidth]{./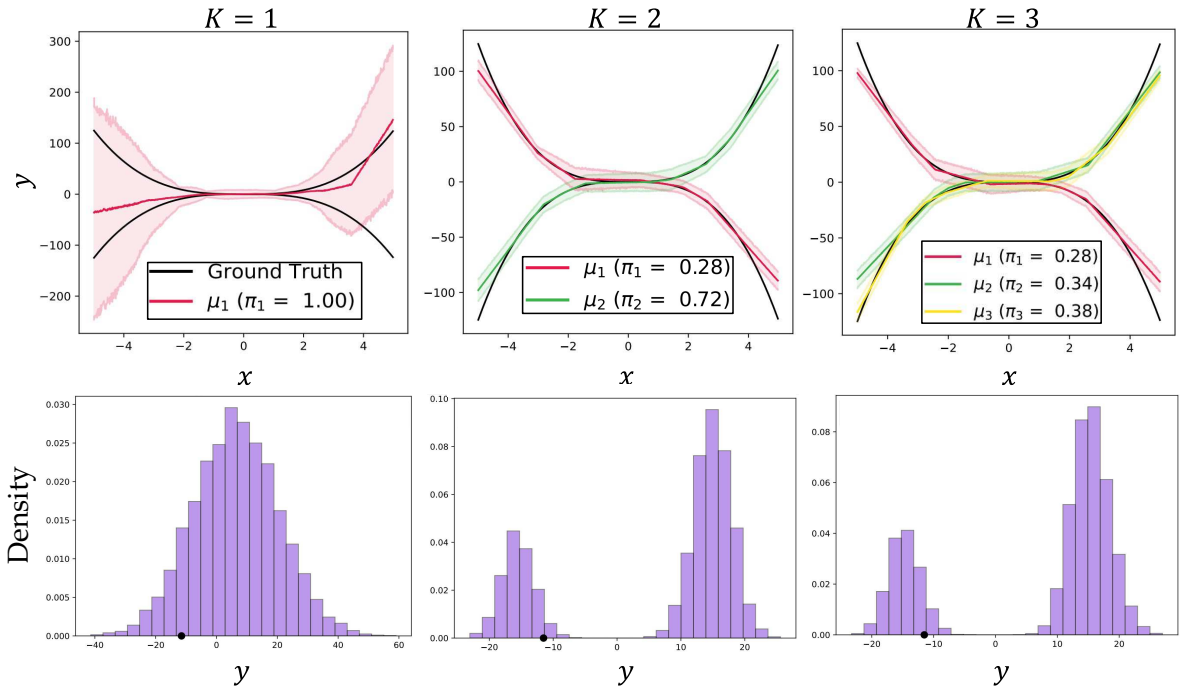}
    \caption{Effect of the number of mixture components on the learned predictive distribution under bimodal noise.}
    \label{fig: components_ablation_bimodal}
\end{figure}

\subsubsection{Ablation: Weight Initialization Schemes and Data Standardization}

To test the impact of weight initialization of the neural network on the performance of DGME, we perform the following ablation study: we train a DGME for 5 rounds, where 10 epochs are used to resolve the M-Step in each round. We use the same architecture as in our toy experiments. We evaluate the NLL on the training set under five different initializations: PyTorch default initialization, initialization with uniform distribution with bounds -0.01 to 0.01, initialization with normal distribution with mean 0 and standard deviation $10^{-6}$, Xavier uniform initialization \cite{glorot2010understanding}, and Xavier normal initialization. As a note, the PyTorch default initialization for a linear layer is done via a uniform distribution ${\cal U}(-\frac{1}{\sqrt{a}}, \frac{1}{\sqrt{a}})$, where $a$ denotes the number of input features to the linear layer. Please refer to \cite{glorot2010understanding} for more information on these weight initialization schemes.

We train the model for each toy dataset over 20 total runs and report the average training NLL and its corresponding standard error. We run this ablation twice: once for training with non-standardized data and once for training with standardized data. The results are shown in Table \ref{tab: init_unstandardized} and Table \ref{tab: init_standardized}. We can see from the results that although weight initialization has some impact on the results, if the data is standardized, it becomes less important. We also see that across all datasets, the default PyTorch initialization gives the most favorable results for both non-standardized and standardized data.

\begin{table}[t]
\centering
\begin{tabular}{@{}cccccc@{}}
\toprule
                        & PyTorch Default & Uniform         & Normal          & Xavier Uniform  & Xavier Uniform  \\ \midrule
Normal - Unimodal       & $2.86 \pm 0.07$ & $3.08 \pm 0.04$ & $3.02 \pm 0.04$ & $2.88 \pm 0.05$ & $2.90 \pm 0.05$ \\
Heavy-Tailed - Unimodal & $3.16 \pm 0.05$ & $3.42 \pm 0.06$ & $3.36 \pm 0.05$ & $3.18 \pm 0.05$ & $3.20 \pm 0.05$ \\
Normal - Bimodal        & $3.36 \pm 0.17$ & $3.56 \pm 0.06$ & $3.61 \pm 0.06$ & $3.46 \pm 0.20$ & $3.47 \pm 0.20$ \\ 
\bottomrule
\end{tabular}
\caption{Impact of different of weight initialization schemes on the train NLL when the data is not standardized. }
\label{tab: init_unstandardized}
\end{table}

\begin{table}[t]
\centering
\begin{tabular}{@{}clllll@{}}
\toprule
                        & \multicolumn{1}{c}{PyTorch Default} & \multicolumn{1}{c}{Uniform} & \multicolumn{1}{c}{Normal} & \multicolumn{1}{c}{Xavier Uniform} & \multicolumn{1}{c}{Xavier Uniform} \\ \midrule
Normal - Unimodal       & $2.55 \pm 0.02$                     & $2.55 \pm 0.01$             & $2.56 \pm 0.01$            & $2.54 \pm 0.01$                    & $2.53 \pm 0.01$                    \\
Heavy-Tailed - Unimodal & $2.87 \pm 0.02$                     & $2.88 \pm 0.01$             & $2.88 \pm 0.01$            & $2.86 \pm 0.01$                    & $2.87 \pm 0.01$                    \\
Normal - Bimodal        & $3.13 \pm 0.07$                     & $3.63 \pm 0.01$             & $3.60 \pm 0.04$            & $3.27 \pm 0.09$                    & $3.24 \pm 0.09$                   \\ 
\bottomrule
\end{tabular}
\caption{Impact of different of weight initialization schemes on the train NLL when the data is standardized. }
\label{tab: init_standardized}
\end{table}

\subsubsection{Illustrative Results: Additive Gaussian Noise}\label{sec: supp-mat-additive-gaussian}

We compare DGMEs with the baselines on the toy regression dataset with Gaussian noise. Figure \ref{fig: sota_comparison_gauss} shows the performance of DGMEs compared to MDNs, MCD and DEs. DGMEs has comparable performance to MCD and DEs and outperforms MDNs.

\begin{figure}[ht]
    \centering
    \includegraphics[width=\linewidth]{./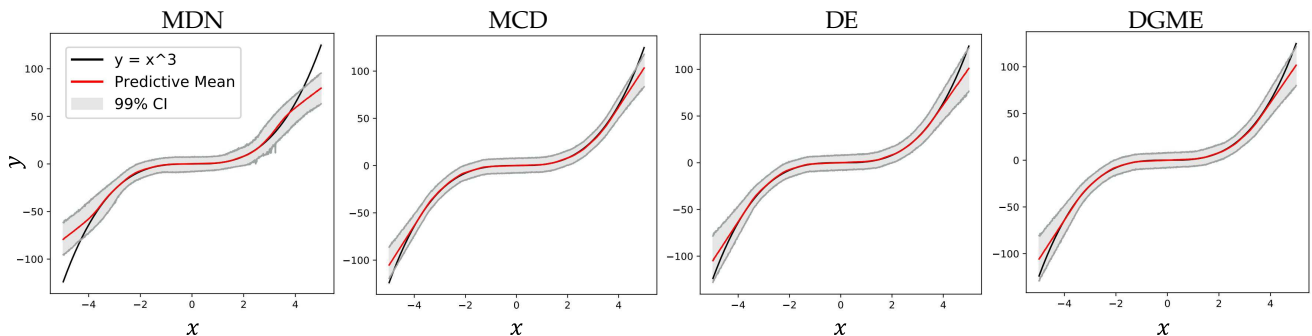}
    \caption{Performance on a toy regression task with Gaussian noise of DGMEs (right) with state-of-the-art methods MDNs, MCD and DEs.}
    \label{fig: sota_comparison_gauss}
\end{figure}

\subsection{Regression on Real Datasets}\label{app: regression}

For real data experiments on a regression task we use the following datasets: (a) Boston Housing dataset\footnote{\url{https://www.kaggle.com/datasets/schirmerchad/bostonhoustingmlnd}}, (b) Concrete compressive strength dataset\footnote{\url{https://archive.ics.uci.edu/ml/datasets/concrete+compressive+strength}} \citep{yeh1998modeling}, (c) Energy efficiency dataset\footnote{\url{https://archive.ics.uci.edu/ml/datasets/energy+efficiency}} \citep{tsanas2012accurate}, (d) Kinematics of an 8 link robot arm dataset \footnote{\url{https://www.openml.org/search?type=data&sort=runs&id=189&status=active}}, (e) 
Combined cycle power plant dataset\footnote{\url{https://archive.ics.uci.edu/ml/datasets/combined+cycle+power+plant}} \citep{tufekci2014prediction}, (f) Wine dataset\footnote{\url{https://archive.ics.uci.edu/ml/datasets/wine}} and (g) Yacht hydrodynamics dataset\footnote{\url{https://archive.ics.uci.edu/ml/datasets/Yacht+Hydrodynamics}}.

In the main text, to provide a fair comparison with techniques that assume the conditional distribution of the data is Gaussian, we summarize the mixture distribution output in both MDNs and DGMEs into a single Gaussian and then evaluate the NLL. This is analagous to the way DEs compute the NLL. We also provide additional results for the test NLL under the assumption of a  mixture of Gaussians in Table \ref{tab:regression_experiments_NLL_mixG} below.

\begin{table*}[t]
    \centering
    \caption{Test NLL for the regression experiments in the mixture of Gaussians case.}
    \label{tab:regression_experiments_NLL_mixG}
    \resizebox{\textwidth}{!}{
    \begin{tabular}{lccccccc}
    \toprule
    \multicolumn{8}{c}{\textbf{\textsc{Test NLL (Mixture of Gaussians)}}}\\
    \midrule
    Dataset                         &            MDNs &           MCD &            DEs &    DGMEs (J=1) &    DGMEs (J=2) &    DGMEs (J=5) &  DGMEs (J=10) \\
    \midrule
    Boston housing                  &    2.71 $\pm$ 0.45 &  2.46 $\pm$ 0.25  &   2.41 $\pm$ 0.25 &  \bf 2.33 $\pm$ 0.18 & \bf  2.33 $\pm$ 0.23 &   2.51 $\pm$ 0.33 &    2.74 $\pm$ 0.53 \\
    Concrete                        &    3.04 $\pm$ 0.22 &   3.04 $\pm$ 0.09 &   3.06 $\pm$ 0.18 &   3.03 $\pm$ 0.10 &   2.99 $\pm$ 0.14 &   2.97 $\pm$ 0.24 & \bf 2.94 $\pm$ 0.22 \\
    Energy                          &  \bf  0.70 $\pm$ 0.17 &   1.99 $\pm$ 0.09 &   1.38 $\pm$ 0.22 &   1.56 $\pm$ 0.14 &   1.31 $\pm$ 0.12 &   0.96 $\pm$ 0.20 &  0.92 $\pm$ 0.48 \\
    Kin8nm                          &  -1.17 $\pm$ 0.04  &  -0.95 $\pm$ 0.03 &  -1.20 $\pm$ 0.02 &  -1.20 $\pm$ 0.02 &  -1.23 $\pm$ 0.03 & \bf -1.24 $\pm$ 0.02 &  \bf  -1.24 $\pm$ 0.02 \\
    Power plant                     &  \bf  2.74 $\pm$ 0.04 &   2.80 $\pm$ 0.05 &   2.79 $\pm$ 0.04 &   2.81 $\pm$ 0.03 &   2.79 $\pm$ 0.03 &   2.77 $\pm$ 0.02 &     2.75 $\pm$ 0.02 \\
    Wine                            &   0.43 $\pm$ 0.86 &   0.93 $\pm$ 0.06 &   0.94 $\pm$ 0.12 &   0.93 $\pm$ 0.12 &   0.90 $\pm$ 0.09 &   0.81 $\pm$ 0.11 &    \bf 0.18 $\pm$ 0.39 \\
    Yacht                           &   0.51 $\pm$ 0.37 &   1.55 $\pm$ 0.12 &   1.18 $\pm$ 0.21 &   0.94 $\pm$ 0.19 &   0.66 $\pm$ 0.18 &  0.51 $\pm$ 0.23 & \bf    0.42 $\pm$ 0.22 \\
    \bottomrule
    \end{tabular}
    }
\end{table*}

\subsection{Hyperparameter Tuning for Financial Forecasting}\label{sec:supp-mat-hyperparam-tuning-finance}
We tuned the hyperparameters of the architecture (number of LSTM layers, number of fully-connected layers, number of LSTM hidden units, number of hidden units in fully-connected layers), optimization procedure (weight decay and learning rate), and the uncertainty quantification associated parameters (dropout probability, and homoscedastic variance value for MCD and MultiSWAG) for each of the approaches using cross validation . We note that all methods use the same feature extractor (LSTM and fully-connected network), which is obtained by hyperparameter tuning each dataset to a single network. To hyperparameter tune, we took the full training period and split it into an ordered sequence of a 90\% training period and a 10\% validation period. We select the hyperparameters based on the combination that maximizes the NLL on the validation period for each dataset.

\section{Possible Extension to Classification Tasks}
\label{app: ood_detection}
Techniques like MDNs and DGMEs are not suited for dealing with classification tasks, since the output of both models is a mixture of Gaussian distributions. For classification tasks, we instead can consider a mixture of categorical distributions, rather than a mixture of Gaussian distributions.  In particular, the conditional distribution $p_{\theta}(y|x)$ is given by
\begin{equation*}
    p_{\theta}(y|x) = \sum_{k=1}^K \pi_k \prod_{i=1}^{d_y} p_{\theta_k}^i(x)^{\mathbb{I}(y=i)},
\end{equation*}
where $p_{\theta_k}^i(x)$ denotes the probability that $y$ belongs to the $i$-th class according to the $k$-th mixture. In this case, we assume MDNs and DGMEs output these probabilities rather than the mean and variance parameterizing a Gaussian distribution.  
\subsection{Entropy Calculation}
To evaluate uncertainty in classification tasks, we consider the average predictive entropy as the metric. To compute the average predictive entropy for a sample $x$, we use the following estimate:
\begin{equation*}
    \widehat{\rm Ent}(x) = -\frac{1}{M} \sum_{m=1}^M \sum_{i\in{\cal C}}\tilde{p}_{(m)}^{i}(x) \log \tilde{p}_{(m)}^{i}(x),
\end{equation*}
where $\tilde{p}_{(m)}^i(x)$ denotes the probability of class $i$ according to the $m$-th sample from the predictive distribution and ${\cal C}$ denotes the set of classes. For both MDNs and DGMEs, these samples are obtained by the following procedure:
\begin{align*}
    k^{(m)} &\sim {\rm Categorical}(\pi_1,\ldots,\pi_K), \\
    \tilde p_{(m)}^i &= p^i_{\theta_{k^{(m)}}}.
\end{align*}
Note that we incorporate dropout in the training procedure of MDNs and DGMEs for this experiment by applying a stochastic forward pass to the sampled network $k^{(m)}$.
\subsection{Example: Uncertainty Evaluation on MNIST}
As an example, we compare DGMEs ability to reason about the underlying uncertainty of new samples with the baseline approaches  with regards to the MNIST handwritten digits dataset. Specifically, for each method, we train a MLP network with 3 hidden layers and 200 hidden units per layer with ReLU activations on the MNIST dataset, including only digits 0-3 and 5-9. After the models are trained, we evaluate the average predictive entropy over three different datasets: the training dataset (known classes), a dataset containing only the digit 4 (unknown classes), and the Fashion-MNIST dataset (unrelated data). We use $M=100$ samples from the predictive distribution to form an estimate of the predictive entropy for each method.We describe in more detail how the average predictive entropy is computed for each method in the Supplementary Material, Section~\ref{sec:supp-mat-hyperparam-tuning-finance}. The results for this experiment are shown in Table \ref{tab: ood_results}, which are averaged over 10 independent runs of each method. The results indicate that DGMEs are able to appropriately reason about the uncertainty in each of the datasets and is competitive with the baseline approaches in each case. DGMEs appropriately obtain that lowest entropy on the training dataset (i.e., the digits it was trained on), obtains a slightly higher entropy on the MNIST dataset containing unknown classes, and the highest entropy on the Fashion-MNIST dataset, which contains examples unrelated to the original classification task.

\begin{table*}[!t]
\centering
\begin{tabular}{lcccc}
\hline
\multicolumn{5}{c}{\textbf{\textsc{Average Predictive Entropy}}}                                                  \\ \hline
Dataset         & MDNs               & MCD               & DEs                & DGMEs              \\ \hline
MNIST (Known)   & $0.019 \pm 0.005$ & $0.012 \pm 0.003$ & $0.012 \pm 0.002$ & $0.015 \pm 0.002$ \\
MNIST (Unknown) & $0.192 \pm 0.032$ & $0.180 \pm 0.020$ & $0.180 \pm 0.020$ & $0.193 \pm 0.016$ \\
Fashion-MNIST   & $0.663 \pm 0.110$ & $0.714 \pm 0.140$ & $0.706 \pm 0.067$ & $0.698 \pm 0.057$ \\
\hline
\end{tabular}
\caption{Average predictive entropy for classification datasets. DGMEs are able to appropriately reason about the underlying uncertainty of OOD samples (MNIST with unknown classes and Fashion-MNIST) and is competitive with respect to state-of-the-art approaches.}
\label{tab: ood_results}
\end{table*}

\section{Sampling from the Predictive Distribution}\label{sec:supp-mat-sampling}
To understand how sampling from the predictive distribution works in DGMEs, we begin with standard formula for determining the predictive distributions in Bayesian models:
\begin{equation*}
    p(y|x, {\cal D}) = \int_{\Theta} p_{\theta}(y|x)p(\theta|{\cal D}) d\theta.
\end{equation*}
In the case of DGMEs, $p_{\theta}(y|x)$ is a mixture of Gaussian distributions and $p(\theta|{\cal D})$ is approximated via dropout.  An important property of the predictive distribution in the case of mixture distributions is that it can be expressed as a mixture of predictive distributions. This property can be derived as follows:
\begin{align*}
    \label{eq: mixture_predictive}
    p(y|x, {\cal D}) &= \int_{\Theta} p_{\theta}(y|x)p(\theta|{\cal D})d\theta \\
    &= \int_{\Theta} \left(\sum_{k=1}^K \pi_k p_k(y|x, \theta_k)\right) p(\theta|{\cal D}) d\theta \\
    &= \int_{\Theta} \sum_{k=1}^K \pi_k p_k(y|x, \theta_k)p(\theta|{\cal D}) d\theta \\
    &= \sum_{k=1}^K \pi_k \int_{\Theta} p_k(y|x, \theta_k)p(\theta|{\cal D}) d\theta \\
    &= \sum_{k=1}^K \pi_k \int_{\Theta_k} p_k(y|x, \theta_k)d\theta_k  \underbrace{\int_{\Theta_{-k}} p(\theta|{\cal D}) d\theta_{-k}}_{{p(\theta_k|{\cal D})}} \\
    &= \sum_{k=1}^K \pi_k \int_{\Theta_k} p_k(y|x, \theta_k) p(\theta_k|{\cal D}) d\theta_k.
\end{align*}
Since $p_k(y|x, {\cal D}) = \int_{\Theta_k} p_k(y|x, \theta_k)p(\theta_k|{\cal D})d\theta_k$, we obtain the following expression for the predictive distribution:
\begin{equation*}
    p(y|x, {\cal D}) = \sum_{k=1}^K \pi_k p_k(y|x, \theta_k).
\end{equation*}
This form implies that we can draw samples from the predictive distribution in DGMEs using the following procedure:
\begin{enumerate}
    \item Sample the mixture component: 
    $$k\sim{\rm Categorical}(\pi_1,\ldots,\pi_K)$$
    \item Sample the posterior parameters of the given mixture component. In this work, dropout was used to approximate each posterior $p(\theta_k|{\cal D})$:
    \begin{align*}
    a_{k, i} &\sim {\rm Bernoulli}(p_d), \quad i=1,\ldots, d_{\theta}, \\
    \theta_k &= a_k \odot \theta_k^\star
    \end{align*}
    \item Draw the sample of $y$ from the appropriate predictive distribution:
    $$ y \sim  p_k(y|x, \theta_k)$$
\end{enumerate}



\section{Comparison of Uncertainty Quantification Approaches}\label{sec:supp-mat-comp}
Here, we provide an overall comparison of the benchmarks used in the experiments of this work as compared to the proposed approach along different qualities: the likelihood assumption, whether or not mixture weights are learned, how aleatoric uncertainty is quantified, and how epistemic uncertainty is quantified. This comparison is provided in Table \ref{tab: compare_methods}

\begin{table}[b]
\centering
\resizebox{\textwidth}{!}{\begin{tabular}{@{}lp{1in}p{1in}p{1.3in}p{1.35in}p{1.35in}@{}}
\toprule
\textbf{Method} &
  \textbf{Likelihood} &
  \textbf{Mixture Weights} &
  \textbf{Aleatoric Uncertainty} &
  \textbf{Epistemic Uncertainty} &
  \textbf{Other Notes} \\ \midrule
MDNs &
  Mixture of Gaussians &
  Learned and input dependent &
  Heteroscedastic &
  None in original implementation, but dropout is applied for fair comparison in this implementation &
  Off-the-shelf can be applied to account for epistemic uncertainty (e.g., dropout, Laplace approximation, SWAG, variational Bayes, etc.) \\
  \midrule
MCD &
  Gaussian &
  Each prediction made via a stochastic forward pass at test time is equally weighted. &
  Homoscedastic &
  Dropout &
   \\
   \midrule
DEs &
  Gaussian &
  Assumed uniform &
  Heteroscedastic &
 Adversarial training and weight initialization. Dropout is also applied in this implementation using hyperparameter optimization. &
   \\
   \midrule
MultiSWAG &
  Gaussian &
  Assumed uniform &
  Homoscedastic &
  Stochastic weight averaging Gaussian (SWAG) &
  One can also account for heteroscedasticity by applying SWAG training to a deep ensemble that outputs a mean and variance \\
  \midrule
DGMEs &
  Mixture of Gaussians &
  Learned and independent of input &
  Heteroscedastic &
  Dropout in this implementation &
  Other methods to account for epistemic can be used off-the-shelf (e.g., Laplace approximation, SWAG, variational Bayes, etc.) \\ \bottomrule
\end{tabular}}
\caption{Summary of benchmarks as compared to DGMEs.}
\label{tab: compare_methods}
\end{table}

\end{document}